\def\R{\mathbb{R}}
\newcommand{\be}{\begin{equation}}
\newcommand{\en}{\end{equation}}
\newcommand{\ben}{\begin{equation*}}
\newcommand{\enn}{\end{equation*}}
\newcommand{\bea}{\begin{aligned}}
\newcommand{\ena}{\end{aligned}}
\def\P{\mathcal{P}}
\def\eps{\epsilon}
\def\gt{\rightarrow}
\def\wgt{\rightharpoonup}
\def\M{\mathcal{M}}
\def\F{\mathcal{F}}
\theoremstyle{plain}
\newtheorem{thm}{Theorem}[section]
\newtheorem{prop}[thm]{Proposition}
\theoremstyle{remark}
\newcommand{\commentout}[1]{}
\begin{document}
\title{Accelerating Langevin Sampling with Birth-death}

\author[Y. Lu]{Yulong Lu}
\address[Y. Lu]{Department of Mathematics, Duke University, Durham NC 27708, USA}
\email{yulonglu@math.duke.edu}

\author[J. Lu]{Jianfeng Lu}
\address[J. Lu]{Department of Mathematics,
  Department of Physics and Department of Chemistry,  Duke University, Durham NC 27708, USA}
\email{jianfeng@math.duke.edu}

\author[J. Nolen]{James Nolen}
\address[J. Nolen]{Department of Mathematics, Duke University, Durham NC 27708, USA}
\email{nolen@math.duke.edu}

\begin{abstract}
  A fundamental problem in Bayesian inference and statistical
  machine learning is to efficiently sample from multimodal
  distributions.  Due to metastability, multimodal distributions are
  difficult to sample using standard Markov chain Monte Carlo methods.
  We propose a new sampling algorithm based on a birth-death mechanism
  to accelerate the mixing of Langevin diffusion. Our algorithm is
  motivated by its mean field partial differential equation (PDE), which is a Fokker-Planck
  equation supplemented by a nonlocal birth-death term.  This PDE can
  be viewed as a gradient flow of the Kullback-Leibler divergence with
  respect to the Wasserstein-Fisher-Rao metric.  We prove that under
  some assumptions the asymptotic convergence rate of the nonlocal PDE
  is independent of the potential barrier, in contrast to the exponential
  dependence in the case of the Langevin diffusion. We illustrate the
  efficiency of the birth-death accelerated Langevin method through
  several analytical examples and numerical experiments.
\end{abstract}

\maketitle

\section{Introduction}
%

Numerical sampling of high dimensional probability distributions with unknown normalization has
important applications in machine learning, Bayesian statistics,
computational physics, and other related fields. The most popular
approaches are based on Markov chain Monte Carlo (MCMC), including
Langevin MCMC \cite{RT96}, underdamped Langevin MCMC
\cite{GrestKremer:86}, Hamiltonian Monte Carlo
\cite{RosskyDollFriedman:78,Anderson:80,Neal:11}, bouncy particle and
zigzag samplers \cite{BCVollmerDoucet:18,BierkensFearnheadRoberts:19},
etc. Many of these approaches, often combined with stochastic
gradient \cite{WellingTeh:11, MaChenFox:15}, have been widely used in machine learning.

When the target probability distribution is (strongly) log-concave,
that is, when its density with respect to the Lebesgue measure is
$\pi(x) \propto e^{-V(x)}$ with $V$ being (strongly) convex, it is
known that most sampling schemes mentioned above can produce
independent random samples efficiently
\cite{dalalyan2017theoretical,durmus2017nonasymptotic,MangoubiSmith:17,ChengChatterji:18,DwivediChenWainwrightYu:18}. The
sampling problem becomes much more challenging when the probability
distribution exhibits multi-modality, as
it takes much longer time for the sampling Markov chain to get
through low-probability regions in the phase space to explore and
balance between multiple modes (also known as metastability).  Many enhanced sampling schemes have
been proposed over the years to overcome such difficulty, including
various tempering schemes \cite{SwendsenWang:86, Geyer:91,
  MarinariParisi:92, Neal:01}, biasing techniques
\cite{WangLandau:01,LaioParrinello:02}, non-equilibrium sampling
\cite{Nilmeier:11}, just to name a few.

In this work, we propose a simple, novel sampling dynamics to overcome
the metastability based on birth-death process. On the continuous
level of evolution of the probability density, the proposed dynamics
is given by
\begin{equation}
  \partial_t \rho_t = \underbrace{\nabla \cdot \bigl(\nabla \rho_t + \rho_t \nabla V\bigr)}_{\text{overdamped Langevin}} + \underbrace{\rho_t \bigl( \log \pi - \log \rho_t \bigr)  - \rho_t \mathbb{E}_{\rho_t} \bigl(\log\pi - \log \rho_t\bigr)}_{\text{birth-death}},
\end{equation}
where one adds a non-local (due to the expectation) update rule to the
conventional overdamped Langevin dynamics. The advantage of the
birth-death process is that it allows global move of the mass of a probability density
directly from one mode to another in the phase space according to
their relative weights, without the difficulty of going through low
probability regions, suffered by any local dynamics such as the
overdamped Langevin MCMC. It is possible to
combine the birth-death process with other sampling dynamics, such as
underdamped Langevin or various accelerated dynamics, while for
simplicity of presentation we will focus only on the birth-death accelerated overdamped
Langevin dynamics in the current work.

\subsection{Contribution.}
Our main theoretical result is that under mild assumptions the
asymptotic convergence rate of the proposed sampling dynamics is
independent of the barrier of the potential corresponding to
the target measure -- this is a substantial improvement of the convergence rate of overdamped Langevin diffusion, which is exponentially
small due to metastability. Moreover, we also establish a gradient flow structure
of the birth-death accelerated Langevin dynamics: it is a gradient
flow of the Kullback-Leibler (KL)-divergence with respect to the
Wasserstein-Fisher-Rao metric.

To demonstrate this improved convergence, we study two
analytical examples showing significant speedup of mixing compared
to Langevin diffuion. We also propose a practical interacting
particle sampling scheme as a numerical implementation of the
birth-death accelerated Langevin dynamics.  The efficiency of the
proposed algorithm is illustrated through several numerical examples.

\subsection{Related works.}
The proposed sampling scheme involves interacting particles that
undergo Langevin diffusion and birth-death process.  Other sampling
schemes via interacting particles have been proposed recently,
including the Stein variational gradient descent (SVGD) flow
\cite{LiuWang16,Liu17} (see also its continuous limit studied in
\cite{LLN19}). Unlike the samplers based on Stein discrepancy, which
replaces the random noise in Langevin dynamics by repulsion of
particles, the sampling scheme proposed in this work employs
birth-death process to enhance the mixing of existing sampling
schemes. In fact, it can also be potentially combined with SVGD to
improve its convergence.

The birth-death process has been used in sequential Monte Carlo (SMC)
samplers \cite{DelMoralDoucetJasra:06}. In SMC, the
birth-death and branching process is used to reduce the variance of
particle weights. While in the current proposed scheme, it is used to
globally move the sampling particles according to the target
measure. The birth-death process is also used recently to accelerate
training of neural networks in the mean-field regime
\cite{Rotskoff2019neuron}, also for a quite different purpose than
accelerating convergence of Monte Carlo samplers.

\subsection*{Acknowledgement}

The work of Jianfeng Lu and the work of James Nolen were partially funded through grants DMS-1454939 and DMS-1351653 from the National Science Foundation, respectively.

\section{Fokker-Planck equation and birth-death process}
\subsection{Langevin dynamics and its Fokker-Planck equation.}
Recall the (overdamped) Langevin diffusion is the solution to the following stochastic differential equation
\be\label{eq:oLan}
d X_t = -\nabla V(X_t)dt  + \sqrt{2}dW_t,
\en
where $X_t \in \R^d$ and $W_t$ is a $d$-dimensional Brownian motion.
Many popular sampling schemes are constructed from discretizations of \eqref{eq:oLan},
such as the unadjusted Langevin algorithm (ULA) and its Metropolized version -- Metropolis-adjusted Langevin algorithm (MALA) \cite{RT96}.
The probability density function $\rho_t(x)$ of  \eqref{eq:oLan}
solves the linear Fokker-Planck equation (FPE) on $\mathbb{R}^d$
\be\label{eq:fp1}
\partial_t \rho_t = \nabla \cdot (\nabla \rho_t + \rho_t \nabla V).
\en
The stationary distribution of  \eqref{eq:oLan} and \eqref{eq:fp1} has  density $\pi(x) = e^{-V(x)}/Z$.
In the seminal work of Jordan-Kinderlehrer-Otto \cite{JKO}, the FPE \eqref{eq:fp1} was identified as the gradient flow of the KL-divergence  (\textit{i.e.}, relative entropy)
$\mathrm{KL}(\rho | \pi) = \int \rho \log(\rho/\pi) \,dx$,
with respect to the $2$-Wasserstein distance.
Moreover, if the target measure $\pi$ satisfies
 the logarithmic Sobolev inequality (LSI): for any probability distribution $\rho$,
\be
\label{eq:log-Sobolev}
\mathrm{KL}(\rho | \pi) \leq \frac{1}{\lambda}\mathcal{I}(\rho | \pi) \text{ with the relative Fisher information } \mathcal{I}(\rho | \pi) = \int \rho |\nabla \log (\rho/\pi)|^2 \,dx,
\en
then we have the exponential convergence $\mathrm{KL}(\rho_t | \pi)  \leq e^{-\lambda t} \mathrm{KL}(\rho_0 | \pi)$.
The convergence rate $\lambda$ above may be exponentially small though when the target distribution is multimodal with high potential barrier; see \cite{BEGK04, BGK05}.

\subsection{Pure birth-death process.}
The main idea of this work is to use birth-death process to accelerate
sampling. Before combining it with the Langevin dynamics, let us
consider the pure birth-death equation (BDE) given by
\begin{equation}\label{eq:fp2}
\partial_t \rho_t = -\alpha_t \rho_t, \quad  \textrm{ with } \quad \alpha_t(x) := \log \rho_t(x) - \log \pi(x) - \int_{\R^d} (\log \rho_t - \log \pi)\rho_t dy.
\end{equation}
As there is no spatial derivative involved in \eqref{eq:fp2}, it can be viewed as a (infinite) system of ordinary differential equations, indexed by $x \in \mathbb{R}^d$, coupled through the integral term in $\alpha_t$.  Observe that $\pi$ is an invariant measure of  \eqref{eq:fp2}.
Moreover, equation \eqref{eq:fp2} depends on $\pi$ only up to a multiplicative constant,
making it feasible for sampling $\pi$ with an unknown normalization constant.
The definition of the birth/death rate $\alpha_t(x)$  in \eqref{eq:fp2} is very intuitive. In fact,
ignoring the last integral term in the definition of $\alpha_t$, one sees that the solution $\rho_t$ to \eqref{eq:fp2}
adjusts the mass according to the difference of the logarithm of current density and that of the target:
the density $\rho_t(x)$ at a location $x$ increases (or decreases) if
$\rho_t(x) < \pi(x)$ (or $\rho_t(x) > \pi(x)$). The integral term in \eqref{eq:fp2} is added to guarantee that the total integral of $\rho_t$ is conserved
during the evolution, and thus $\rho_t$ remains a probability distribution (positivity is also to verify).

The birth-death dynamics \eqref{eq:fp2} differs substantially from FPE
\eqref{eq:fp1} in many aspects. The former is essentially a nonlinear
system of ODEs (but with a non-local coefficient) whereas the later is
a linear parabolic PDE.  Due to the absence of diffusion, the support
of the solution $\rho_t$ of \eqref{eq:fp2} never increases during the
evolution.
This seems suggesting that the birth-death equation is
unsuitable for sampling.
However, we shall show in Theorem \ref{thm:conv1} that
if the initial density is positive everywhere, then $\rho_t$ converges to $ \pi$  as $t\rightarrow \infty$.

\subsection{Fokker-Planck equation with birth-death dynamics.}

The real power of the birth-death process above comes in when it is combined with the Fokker-Planck equation \eqref{eq:fp1}
, which yields the following equation on the level of probability density, already appeared in the introduction:
\be
\label{eq:fp3}
\partial_t \rho_t = \nabla \cdot (\nabla \rho_t + \rho_t \nabla V) -\alpha_t \rho_t,
\en
where $\alpha_t = \log \rho_t - \log \pi - \int_{\R^d} (\log \rho_t - \log \pi)\rho_t dx$.
Before we discuss the discretization of \eqref{eq:fp3} in Section~\ref{sec:particle},
which will lead to an efficient particle sampler in practice, in what follows, we study the
Fokker-Planck equation of birth-death accelerated Langevin dynamics (BDL-FPE) \eqref{eq:fp3},
in particular its favorable convergence properties compared to the original Langevin dynamics \eqref{eq:fp1} and the pure birth-death process \eqref{eq:fp2}.

\section{Analysis of the Fokker-Planck equation with birth-death}
\subsection{Gradient flow structure} In parallel to well-known fact
that FPE \eqref{eq:fp1} is the 2-Wasserstein gradient flow of the
KL-divergence, BDL-FPE \eqref{eq:fp3} can be viewed as a gradient flow
of the KL-divergence with respect to a different metric.  Our result
is motivated by recent works on the Wasserstein-Fisher-Rao (WFR)
distance \cite{KMV16, CPSV18, LMS16, LMS18} in the study of unbalanced
optimal transport. Specifically, we define the Wasserstein-Fisher-Rao
distance (also known as the spherical Hellinger-Kantorovich distance
\cite{KMV16, brenier2018optimal}) by \be\label{eq:dwfr}
d^2_{\textrm{WFR}} (\rho_0, \rho_1) = \inf_{\rho_t, u_t \in
  \mathcal{A}_{\textrm{WFR}}(\rho_1, \rho_1)} \int_0^1 \Big(\int
|\nabla u_t|^2 + |u_t|^2 d\rho_t - \Big(\int u_t \rho_t dx
\Big)^2\Big)dt, \en where the admissible set
$\mathcal{A}_{\textrm{WFR}} (\rho_0, \rho_1)$ consists of all pairs
$(\rho_t, u_t) \in \P(\R^d)\times L^2(\R^d, d\rho_t)$ such that
$\{\rho_t\}_{t\in[0,1]}$ is a narrowly continuous curve in
$\mathcal{P}(\R^d)$ connecting $\rho_0$ and $\rho_1$ and that
\begin{equation}\label{eq:cont2}
\partial_t \rho_t = \nabla\cdot (\rho_t \nabla u_t)-\rho_t (u_t - \int_{\R^d} u_t \rho_t dx ) \textrm{ in the weak sense}.
\end{equation}
Here $ \P(\R^d)$ denotes the space of probability measures on $\R^d$ and $L^2(\R^d, d\rho_t)$ is the space of functions $u$ satisfying $\int u^2(x) \rho_t(x)dx < \infty$.
We emphasize that for our sampling purpose we have modified the
original definition of WFR distance in \cite{BBM16,KMV16, CPSV18} by
adding the integral penalty term to keep mass conserved.
Without this term, $\rho_t$ may experience gain and loss of mass
during transportation procedure. Our first result is the following
theorem which characterizes the gradient flow structure of  BDL-FPE
\eqref{eq:fp3}, whose proof is provided in Appendix \ref{sec:gradientflow}.
\begin{thm}\label{thm:gf}
  The Fokker-Planck equation for birth-death accelerated Langevin (BDL-FPE)
  dynamics \eqref{eq:fp3} is the gradient flow of the KL-divergence
  $\mathrm{KL}(\cdot | \pi)$ with respect to the
  Wasserstein-Fisher-Rao distance \eqref{eq:dwfr}.
\end{thm}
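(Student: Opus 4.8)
The plan is to cast the statement within Otto's formal Riemannian framework: read off from the Benamou--Brenier-type formula \eqref{eq:dwfr} the metric tensor that WFR induces on the space of probability densities, and then verify that the Wasserstein--Fisher--Rao gradient of $\mathrm{KL}(\cdot\,|\,\pi)$ generates exactly the velocity field appearing in \eqref{eq:fp3}. To each potential $u$ the continuity equation \eqref{eq:cont2} associates the velocity (tangent vector)
\[
s_u := \nabla\cdot(\rho\,\nabla u) - \rho\Bigl(u - \int_{\R^d} u\,\rho\,dx\Bigr),
\]
and reading off the integrand in \eqref{eq:dwfr} identifies the metric tensor at $\rho$ as the quadratic form
\[
g_\rho(s_u,s_u) = \int_{\R^d}\bigl(|\nabla u|^2 + |u|^2\bigr)\rho\,dx - \Bigl(\int_{\R^d} u\,\rho\,dx\Bigr)^2,
\]
whose polarization is $g_\rho(s_u,s_v) = \int(\nabla u\cdot\nabla v + uv)\rho\,dx - \int u\rho\,dx\int v\rho\,dx$. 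Note that $u$ enters $s_u$ and $g_\rho$ only up to an additive constant, so the potential is determined modulo constants; I would first record this, since it is what makes the tensor well defined on tangent vectors.

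Next I would compute the gradient. The first variation of the energy is $\tfrac{\delta}{\delta\rho}\mathrm{KL}(\rho|\pi) = \log\rho - \log\pi$ up to a constant, which is irrelevant because every admissible velocity satisfies $\int s_u\,dx = 0$. Writing the WFR gradient as $s_{u^*}$, the defining relation $g_\rho(s_{u^*}, s_u) = \int (\log\rho - \log\pi)\,s_u\,dx$ must hold for all $u$. Integrating the right-hand side by parts (using mass conservation of $s_u$) yields
\[
-\int\nabla(\log\rho-\log\pi)\cdot\nabla u\,\rho\,dx - \int(\log\rho-\log\pi)\,u\,\rho\,dx + \int(\log\rho-\log\pi)\rho\,dx\int u\rho\,dx,
\]
and comparing term by term with $g_\rho(s_{u^*},s_u)$ forces $u^* = -(\log\rho - \log\pi)$.

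Finally I would substitute this $u^*$ into $s_{u^*}$ and check that it matches the right-hand side of \eqref{eq:fp3}. Using $\log\pi = -V - \log Z$, so $\nabla\log\pi = -\nabla V$, one gets $\rho\,\nabla u^* = -(\nabla\rho + \rho\nabla V)$, whence $\nabla\cdot(\rho\,\nabla u^*) = -\nabla\cdot(\nabla\rho+\rho\nabla V)$; and the zero-mean part reads $u^* - \int u^*\rho\,dx = -\alpha_t$ with $\alpha_t$ exactly as in \eqref{eq:fp3}. Therefore the gradient-flow equation $\partial_t\rho = -s_{u^*}$ reproduces $\partial_t\rho = \nabla\cdot(\nabla\rho+\rho\nabla V) - \alpha_t\rho$, which is precisely \eqref{eq:fp3}.

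The genuinely delicate part is not this formal matching but its justification: Otto's calculus presumes a Riemannian structure that must be made precise. I expect the main obstacle to be establishing that the representation $u\mapsto s_u$ is well defined and invertible modulo constants on an appropriate weighted space (so that $g_\rho$ is a nondegenerate metric), that the variational problem \eqref{eq:dwfr} indeed defines a distance inducing this tensor, and that the integrations by parts above are licensed by suitable decay and regularity of $\rho$ together with growth control on $V$. Interpreting \emph{gradient flow} rigorously—for instance through an energy-dissipation characterization or a minimizing-movement scheme for \eqref{eq:dwfr}—and handling the nonlinearity introduced by the mass-conservation penalty is where the real work lies; the formal identification of $u^*$ is the easy step.
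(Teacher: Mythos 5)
Your proposal is correct and follows essentially the same route as the paper: both work in Otto's formal Riemannian framework, read the WFR metric tensor off the dynamic formulation \eqref{eq:dwfr}, compute the gradient of $\mathrm{KL}(\cdot|\pi)$ by duality and integration by parts against admissible velocities, and verify that the resulting potential $u^* = -\log(\rho/\pi)$ (modulo constants) turns the gradient-flow equation into \eqref{eq:fp3}. The only cosmetic difference is that the paper packages the duality computation as a general proposition for an arbitrary functional $\F$ before specializing to the KL-divergence, and its proof remains at the same formal level as yours regarding the rigor issues you flag at the end.
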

As a consequence of Theorem \ref{thm:gf}, the dynamics \eqref{eq:fp3}
dissipates the KL-divergence in a steepest descent manner with respect
to the WFR metric \eqref{eq:dwfr}, similar to the variational structure for the
Fokker-Planck equation \eqref{eq:fp1} (w.r.t. the 2-Wasserstein metric).

\subsection{Convergence analysis}

We now analyze the convergence of BDL-FPE~\eqref{eq:fp3}. Proofs of
results in this section can be found in Appendix \ref{sec:proofcov}. We first
establish in the following theorem the global convergence of
\eqref{eq:fp3} by assuming the validity of LSI \eqref{eq:log-Sobolev}.
\begin{thm}\label{thm:conv2}
  Assume that $\pi$ satisfies the log-Sobolev inequality
  \eqref{eq:log-Sobolev} with constant $\lambda > 0$. Then the
  solution $\rho_t$ to BDL-FPE \eqref{eq:fp3} with initial condition $\rho_0$
  satisfies \be \mathrm{KL}(\rho_t | \pi) \leq e^{-\lambda t}
  \mathrm{KL}(\rho_0 | \pi).  \en
\end{thm}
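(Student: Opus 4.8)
The plan is to differentiate the KL-divergence along the flow \eqref{eq:fp3} and to show that the birth-death term contributes an \emph{additional} nonnegative dissipation on top of the Fisher-information dissipation already present for the pure Fokker--Planck equation; once this is established, the LSI \eqref{eq:log-Sobolev} together with Grönwall's inequality closes the argument immediately.

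First I would compute $\frac{d}{dt}\mathrm{KL}(\rho_t|\pi)$. Writing $\phi_t := \log(\rho_t/\pi)$, differentiation under the integral gives $\frac{d}{dt}\mathrm{KL}(\rho_t|\pi) = \int \partial_t\rho_t\,(\phi_t+1)\,dx$, and since the nonlocal term in $\alpha_t$ is designed precisely so that $\int\partial_t\rho_t\,dx = 0$ (mass conservation), the additive constant drops out and $\frac{d}{dt}\mathrm{KL}(\rho_t|\pi) = \int\partial_t\rho_t\,\phi_t\,dx$. Substituting \eqref{eq:fp3} then splits this into a transport part and a birth-death part.

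For the transport part I would use the identity $\nabla\rho_t+\rho_t\nabla V=\rho_t\nabla\phi_t$ (valid because $\nabla\log\pi=-\nabla V$) and integrate by parts to obtain $\int\nabla\cdot(\rho_t\nabla\phi_t)\,\phi_t\,dx=-\int\rho_t|\nabla\phi_t|^2\,dx=-\mathcal{I}(\rho_t|\pi)$. For the birth-death part, noting that $\int(\log\rho_t-\log\pi)\rho_t\,dx=\mathrm{KL}(\rho_t|\pi)$ so that $\alpha_t=\phi_t-\mathrm{KL}(\rho_t|\pi)$, I get $-\int\alpha_t\rho_t\phi_t\,dx=-\big(\int\phi_t^2\rho_t\,dx-(\int\phi_t\rho_t\,dx)^2\big)=-\mathrm{Var}_{\rho_t}(\phi_t)\le 0$. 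Combining the two parts yields
\[
\frac{d}{dt}\mathrm{KL}(\rho_t|\pi)=-\mathcal{I}(\rho_t|\pi)-\mathrm{Var}_{\rho_t}(\phi_t)\le-\mathcal{I}(\rho_t|\pi).
\]

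Finally, the LSI \eqref{eq:log-Sobolev} gives $\mathcal{I}(\rho_t|\pi)\ge\lambda\,\mathrm{KL}(\rho_t|\pi)$, whence $\frac{d}{dt}\mathrm{KL}(\rho_t|\pi)\le-\lambda\,\mathrm{KL}(\rho_t|\pi)$, and Grönwall's inequality delivers the stated bound. The computation above is formal; the main obstacle is rigor rather than ideas. Specifically, I would need enough regularity, positivity (so $\log\rho_t$ is well defined), and decay of $\rho_t$ at infinity to justify differentiation under the integral, the integration by parts with no boundary contribution, and finiteness of the variance term. I expect this to require either an a priori existence/regularity theory for \eqref{eq:fp3} or an approximation/mollification argument. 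The structurally pleasant point is that the birth-death term only helps, so the Langevin LSI rate is automatically preserved; the genuine barrier-independent speedup must be extracted from the extra $-\mathrm{Var}_{\rho_t}(\phi_t)$ dissipation and is the business of the sharper estimates elsewhere.
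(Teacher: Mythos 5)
Your proposal is correct and follows essentially the same route as the paper: the paper's proof states exactly the identity you derive, namely $\frac{d}{dt}\mathrm{KL}(\rho_t|\pi) = -\mathcal{I}(\rho_t|\pi) - \bigl(\int \rho_t |\log(\rho_t/\pi)|^2\,dx - (\int \rho_t \log(\rho_t/\pi)\,dx)^2\bigr)$, observes that the second term is non-positive (your variance term, non-negative by Cauchy--Schwarz), and then applies the LSI with Gr\"onwall. Your writeup simply fills in the splitting, integration by parts, and mass-conservation details that the paper leaves implicit, and your closing remarks on regularity match the formal level of rigor adopted there.
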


Theorem \ref{thm:conv2} shows that the global convergence rate of
BDL-FPE \eqref{eq:fp3} can be no worse than that of FPE \eqref{eq:fp1}.
The convergence rate obtained this way is fully characterized by
the log-Sobolev constant though, which may scale badly when the potential $V$ has high potential barriers.
In contrast, we show in the next theorem that the birth-death term accelerates the diffusion dramatically in the sense that
the asymptotic convergence rate of BDL-FPE \eqref{eq:fp3} is independent of the potential barrier of $V$.
\begin{thm} \label{thm:converge2}
Let $\rho_t$ solve \eqref{eq:fp3} for $t \geq t_0$, with initial condition satisfying $\mathrm{KL}(\rho_{t_0} | \pi)  \leq 1$. Suppose that for some $M \geq 1$,
\begin{equation}
\inf_{x \in \mathbb{R}^d} \frac{\rho_{t_0}(x)}{\pi(x)} \geq e^{-M} \label{fLower1}
\end{equation}
also holds.  Then, for any $\delta \in (0,1/4)$,
\be
\label{eq:kl-bd}
\mathrm{KL}(\rho_t | \pi)  \leq e^{-(2-3\delta)(t - t_*)} \mathrm{KL}(\rho_{t_0} | \pi)
\en
holds for all $t \geq t_* = t_0 + \log\left( \frac{M}{\delta^3} \right)$. In particular, the BDL-FPE \eqref{eq:fp3} has an asymptotic convergence rate which is independent of the potential $V$ corresponding to $\pi$.
\end{thm}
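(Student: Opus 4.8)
The plan is to work directly with the relative entropy and differentiate it along the flow. Writing $h_t := \log(\rho_t/\pi)$ and noting that $\int_{\R^d}(\log\rho_t - \log\pi)\rho_t\,dx = \mathrm{KL}(\rho_t|\pi)$, so that $\alpha_t = h_t - \mathrm{KL}(\rho_t|\pi)$, I would first establish the dissipation identity
\begin{equation*}
\frac{d}{dt}\mathrm{KL}(\rho_t|\pi) = -\mathcal{I}(\rho_t|\pi) - \mathrm{Var}_{\rho_t}(h_t),
\end{equation*}
where $\mathrm{Var}_{\rho_t}(h_t) = \int \rho_t h_t^2\,dx - \big(\int\rho_t h_t\,dx\big)^2$. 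This follows by testing \eqref{eq:fp3} against $h_t$: mass conservation removes the additive constant, the Fokker--Planck term integrates by parts to $-\mathcal{I}(\rho_t|\pi)$ (using $\nabla\rho_t + \rho_t\nabla V = \rho_t\nabla h_t$), and the birth--death term $-\int\alpha_t h_t\rho_t$ collapses to $-\mathrm{Var}_{\rho_t}(h_t)$ because $\int\alpha_t\rho_t = 0$. Discarding the (nonnegative) Fisher information term -- note no LSI is assumed here -- the theorem reduces to lower bounding the variance of $h_t$ by a multiple of $\mathrm{KL}(\rho_t|\pi)$ with constant approaching $2$.

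The heart of the argument is a pointwise/functional inequality. Setting $u = h_t(x)$ and using that $\rho_t$ and $\pi$ are both probability densities (so $\int\pi(e^{u}-1)\,dx=0$), I would prove that whenever $h_t \ge -b$ pointwise,
\begin{equation*}
\int \rho_t h_t^2\,dx \ge (2-b)\int\rho_t h_t\,dx = (2-b)\,\mathrm{KL}(\rho_t|\pi),
\end{equation*}
valid for $0 \le b < 2$. This comes from the one--variable inequality $e^{u}u^2 \ge (2-b)\big(e^{u}u - e^{u}+1\big)$ on $[-b,\infty)$: with the choice $c=2-b$ the auxiliary function $\psi(u)=e^{u}(u^2-cu+c)-c$ satisfies $\psi(0)=\psi'(0)=0$ and $\psi'(u)=e^{u}u(u+2-c)$, so $\psi \ge 0$ exactly on $[c-2,\infty)=[-b,\infty)$; integrating against $\pi$ and using $\int\pi(e^u-1)\,dx=0$ annihilates the linear-in-$e^u$ term. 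Combined with $\mathrm{Var}_{\rho_t}(h_t) = \int\rho_t h_t^2\,dx - \mathrm{KL}(\rho_t|\pi)^2$ this yields $\mathrm{Var}_{\rho_t}(h_t) \ge \big(2-b-\mathrm{KL}(\rho_t|\pi)\big)\,\mathrm{KL}(\rho_t|\pi)$.

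To control $b$ I would propagate the lower bound \eqref{fLower1} in time. Differentiating $h_t$ along \eqref{eq:fp3} gives $\partial_t h_t = \Delta h_t + |\nabla h_t|^2 - \nabla V\cdot\nabla h_t - (h_t - \mathrm{KL}(\rho_t|\pi))$; evaluating at a spatial minimizer of $h_t$ (where $\nabla h_t=0$, $\Delta h_t\ge 0$) and using $\mathrm{KL}\ge 0$ yields, in the sense of an envelope/comparison principle, $\frac{d}{dt}\inf_x h_t \ge -\inf_x h_t$, hence $\inf_x h_t \ge -M e^{-(t-t_0)}=:-b(t)$. Feeding $b(t)=Me^{-(t-t_0)}$ into the previous step produces the closed differential inequality
\begin{equation*}
\frac{d}{dt}\mathrm{KL}(\rho_t|\pi) \le -\big(2 - Me^{-(t-t_0)} - \mathrm{KL}(\rho_t|\pi)\big)\,\mathrm{KL}(\rho_t|\pi).
\end{equation*}
I would then run a two--phase ODE comparison. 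Since the variance is nonnegative, $\mathrm{KL}$ is nonincreasing and stays $\le 1$; for $t-t_0 \ge \log M$ one has $b(t)\le 1$, so the bracket is $\ge 1-b(t)$, and integrating $1-b(t)$ over $[\,t_0+\log M,\ t_*]$ (with $t_*=t_0+\log(M/\delta^3)$, at which $b(t_*)=\delta^3$) gives a decay factor of order $\delta^3$, so that $\mathrm{KL}(\rho_{t_*}|\pi)\lesssim \delta^3$. For $t\ge t_*$ both $b(t)\le\delta^3$ and $\mathrm{KL}\lesssim\delta^3$, so the bracket exceeds $2-3\delta$ (using $\delta<1/4$), and a bootstrap keeping $\mathrm{KL}$ small gives $\frac{d}{dt}\mathrm{KL}\le-(2-3\delta)\mathrm{KL}$ and hence \eqref{eq:kl-bd}.

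The main obstacles are the two sharp, intertwined estimates: establishing the functional inequality with the near--optimal constant $2-b$ (the exponent $2$ is dictated by the quadratic behaviour of $e^u u^2/(e^u(u-1)+1)$ near $u=0$, and the region where $h_t<0$ is precisely what forces the loss $b$ and the hypothesis \eqref{fLower1}), and making the maximum-principle propagation of $\inf_x h_t$ rigorous, which requires enough regularity and decay of $\rho_t$ to justify differentiating the infimum (for instance via a viscosity-solution or approximation argument). The ODE bookkeeping converting $b(t)\to 0$ and $\mathrm{KL}\to 0$ into the stated threshold $t_*$ is then routine.
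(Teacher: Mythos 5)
Your proposal is correct and follows essentially the same route as the paper's proof: the same dissipation identity (the paper writes $-\int\rho_t|\log(\rho_t/\pi)|^2\,dx+\mathrm{KL}^2$ where you write $-\mathrm{Var}_{\rho_t}(h_t)$), the same maximum-principle propagation of $\inf_x\log(\rho_t/\pi)\geq -Me^{-(t-t_0)}$, the same elementary one-variable inequality (your $\psi$-analysis is equivalent to the paper's comparison $(2-2\delta)H_1\leq H_2$ with $H_1(f)=(1+f)\log(1+f)-f$, $H_2(f)=(1+f)|\log(1+f)|^2$), and the same two-phase ODE comparison arriving at the identical $t_*=t_0+\log(M/\delta^3)$.
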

Theorem \ref{thm:converge2} states that as long as the solution $\rho_t$ of
BDL-FPE \eqref{eq:fp3} is not too far from the target ($\mathrm{KL}(\rho_t| \pi) \leq 1$) and satisfies a uniform lower bound (maybe tiny) with respect to $\pi$ starting from $t_0$,
it will converge to $\pi$ with a rate independent of $V$ after a short waiting time. In practice, $t_0$ can be chosen $O(1)$ to satisfy the condition
\eqref{fLower1}; see Section \ref{sec:example} for examples.

For completeness, we also
show the global convergence of BDE \eqref{eq:fp2} (without a rate) in the next theorem. The BDE also has a similar gradient flow structure; we will not go into details here.
\begin{thm}\label{thm:conv1}
 Let $\rho_t$ be the solution to \eqref{eq:fp2} with initial condition $\rho_0$. Assume that $\log \pi(x)$ is finite for any $x\in \R^d$. Assume also that
 $\rho_0$ satisfies that $\mathrm{KL}(\rho_0 | \pi) < \infty$  and $\rho_0(x) > 0$ for all $x\in \R^d$. Then
 $\rho_t(x) > 0$ for all $x\in \R^d$ and $t > 0$. Moreover, for all $x\in \R^d$, $\lim_{t\gt\infty}\rho_t(x) = \pi(x)$ and $\lim_{t\gt \infty}\mathrm{KL}(\rho_t | \pi)  = 0$.
\end{thm}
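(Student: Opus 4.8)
The plan is to linearize the nonlinear, nonlocal equation \eqref{eq:fp2} by passing to logarithms. Set $h_t := \rho_t/\pi$ and $g_t := \log h_t$, and observe that the nonlocal term is exactly $\int_{\R^d}(\log\rho_t - \log\pi)\rho_t\,dy = \mathrm{KL}(\rho_t|\pi) =: K_t$. Since $\pi$ is time-independent, dividing \eqref{eq:fp2} by $\rho_t$ gives $\partial_t g_t(x) = -\alpha_t(x) = -\bigl(g_t(x) - K_t\bigr)$, i.e. $\partial_t g_t(x) + g_t(x) = K_t$. The crucial point is that the forcing $K_t$ is \emph{independent of $x$}, so at each fixed $x$ this is a scalar linear ODE with the same inhomogeneity for every $x$; integrating yields $g_t(x) = e^{-t}g_0(x) + C_t$ with $C_t$ independent of $x$. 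Exponentiating and fixing $C_t$ through the mass constraint $\int\rho_t\,dy = 1$ produces the closed form
\be
\rho_t(x) = \frac{\pi(x)\,\bigl(\rho_0(x)/\pi(x)\bigr)^{e^{-t}}}{Z_t}, \qquad Z_t := \int_{\R^d}\pi\,(\rho_0/\pi)^{e^{-t}}\,dy .
\en
I would first verify directly that this formula solves \eqref{eq:fp2}; the only nontrivial identity needed is $K_t = -Z_t'/Z_t - \log Z_t$, which follows from differentiating $Z_t$ under the integral sign. Uniqueness of the given solution, and hence its coincidence with this explicit one, follows from the linear ODE satisfied by $g_t$ once $K_t$ is known to be finite and locally integrable in $t$, which the representation itself supplies.

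Positivity is then immediate: since $\log\pi$ is finite we have $\pi>0$ everywhere, so $h_0 = \rho_0/\pi\in(0,\infty)$ wherever $\rho_0>0$, whence $h_0^{e^{-t}}>0$ and $\rho_t = \pi h_0^{e^{-t}}/Z_t>0$ for all $x$ and all $t$. To make the formula and its verification rigorous I need $Z_t$ to be finite, positive, and differentiable. Finiteness uses the elementary bound $a^p\le a+1$ for $a>0$ and $p\in(0,1]$, giving $Z_t\le\int\pi(h_0+1)\,dy = \int\rho_0\,dy + \int\pi\,dy = 2$, while positivity is clear; differentiation under the integral is justified by the dominating-function estimates below.

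For the two limits I would apply dominated convergence with $t$-independent majorants. As $t\to\infty$, $e^{-t}\to0$ and $h_0(x)^{e^{-t}} = e^{e^{-t}\log h_0(x)}\to1$ pointwise; since $\pi h_0^{e^{-t}}\le\pi(h_0+1) = \rho_0+\pi\in L^1$, dominated convergence gives $Z_t\to\int\pi\,dy = 1$, and therefore $\rho_t(x) = \pi(x)h_0^{e^{-t}}(x)/Z_t\to\pi(x)$ for every $x$. For the KL divergence the closed form yields $\mathrm{KL}(\rho_t|\pi) = \int\rho_t g_t\,dy = e^{-t}\int\rho_t\log h_0\,dy - \log Z_t$; since $\log Z_t\to0$ it remains to show the first term vanishes. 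Writing it as $Z_t^{-1}\int\pi\,\bigl(e^{-t}h_0^{e^{-t}}\log h_0\bigr)\,dy$ and setting $u = \log h_0$, $p = e^{-t}$, the integrand is $\pi\,p\,u\,e^{pu}$, which tends to $0$ pointwise. The key bounds are $p u e^{pu}\le u e^u$ for $u\ge0$ (the map $p\mapsto p e^{pu}$ is increasing on $(0,1]$, so it is maximized at $p=1$) and $|p u e^{pu}|\le 1/e$ for $u<0$ (since $w\mapsto we^{-w}\le 1/e$); both are independent of $t$.

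The main obstacle is this last dominated-convergence step, specifically the integrability of the majorant on the region $\{h_0\ge1\}$ where $\rho_0$ overshoots $\pi$. There $u e^u = h_0\log h_0$, so the required bound is $\int_{\{h_0\ge1\}}\pi\,h_0\log h_0\,dy<\infty$, and this is exactly where the hypothesis $\mathrm{KL}(\rho_0|\pi)<\infty$ enters: the contribution of $\{h_0<1\}$ to $\int\pi h_0\log h_0\,dy$ lies in $[-1/e,0]$ because $s\log s\ge-1/e$, so finiteness of the total integral $\mathrm{KL}(\rho_0|\pi)$ forces the positive part over $\{h_0\ge1\}$ to be finite. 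On the complementary region $\{h_0<1\}$ the constant $1/e$ is integrable against $\pi$ because $\pi$ is a probability measure. Patching the two pieces gives an integrable, $t$-independent dominating function, and dominated convergence then delivers both $\rho_t\to\pi$ pointwise and $\mathrm{KL}(\rho_t|\pi)\to0$.
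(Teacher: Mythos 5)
Your proof is correct, but it takes a genuinely different route from the paper's. The two arguments share the same starting point: setting $\eta_t=\log(\rho_t/\pi)$ linearizes \eqref{eq:fp2} into $\partial_t\eta_t=-\eta_t+\mathrm{KL}(\rho_t|\pi)$ with an $x$-independent forcing, and the Duhamel formula $\eta_t(x)=e^{-t}\eta_0(x)+\int_0^t e^{-(t-s)}\mathrm{KL}(\rho_s|\pi)\,ds$ yields positivity in both treatments. From there the paper keeps the unknown forcing and controls it by monotonicity: it first shows $\frac{d}{dt}\mathrm{KL}(\rho_t|\pi)$ equals minus the variance of $\log(\rho_t/\pi)$ under $\rho_t$, hence is nonpositive by Cauchy--Schwarz, so $\mathrm{KL}(\rho_t|\pi)$ decreases to some limit $C_*\geq 0$; the Duhamel integral then converges to $C_*$, giving $\rho_t\to e^{C_*}\pi$ pointwise, and $C_*=0$ is inferred softly from the fact that $\rho_t$ and $\pi$ are both probability densities (a step that, to be airtight, needs Fatou's lemma). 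You instead eliminate the forcing altogether using conservation of mass, which collapses the Duhamel representation into the closed-form solution $\rho_t=\pi\,(\rho_0/\pi)^{e^{-t}}/Z_t$; positivity is then immediate, and both limits follow from dominated convergence with your $t$-independent majorants, whose integrability is exactly what $\mathrm{KL}(\rho_0|\pi)<\infty$ provides. Your route buys a fully explicit description of the trajectory, avoids the dissipation identity entirely (so you never need the finiteness of $\int\rho_t|\log(\rho_t/\pi)|^2\,dx$, which the paper's derivative computation tacitly assumes), and replaces the soft identification of the limiting constant by an explicit normalization $Z_t\to 1$. The price is that you invoke mass conservation $\int\rho_t\,dx=1$ and local-in-$t$ integrability of $\mathrm{KL}(\rho_t|\pi)$ as part of the solution concept; but the paper's proof implicitly relies on both as well (it uses that $\rho_t$ is a probability density and writes the same Duhamel integral), so your hypotheses are no stronger than theirs.
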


\section{Illustrative examples}\label{sec:example}

Here we present two very simple examples illustrating how the combined dynamics of BDL-FPE \eqref{eq:fp3} may significantly enhance convergence to equilibrium, compared to either FPE \eqref{eq:fp1} or BDE \eqref{eq:fp2}.

\subsection{Uniform distribution on torus.}
Let $L >\!\!> 1$, and suppose the domain is the $d$-dimensional torus
$\mathbb{T}^d_L = [0,L]^d$ with $\pi(x) \equiv L^{-d}$ being the
density of the uniform measure on $\mathbb{T}^d_L$.  In this case,
FPE dynamics \eqref{eq:fp1} corresponds to the heat equation
$\partial_t w = \Delta w$ on $\mathbb{T}_L^d$.  The spectral gap is
$O(L^{-2})$, and hence the rate of convergence to the equilibrium
measure is $O(L^{-2})$. While this convergence rate is slow for large
$L$, the FPE dynamics \eqref{eq:fp1} may be used to prepare
a good initial condition for the combined BDL-FPE dynamics \eqref{eq:fp3}.
Specifically, a lower bound on the heat kernel shows that at time
$t = 1$ the solution to $\partial_t w = \Delta w$ will satisfy
\[
\inf_{x \in \mathbb{T}_L^d} w(1,x) \geq (c_1)^{d/2} e^{- d L^2 }.
\]
for a universal constant $c_1 > 0$, that is independent of the initial data (assuming it is a probability measure) and the dimension $d$. Then, if we use $\rho_1(x) = w(1,x)$ as initial data for the combined dynamics \eqref{eq:fp3}, for $t \geq t_0 = 1$, the condition \eqref{fLower1} holds with $M = O(d L^2)$.  If $\mathrm{KL}(\rho_{t_0} | \pi)  \leq 1$ also holds, Theorem \eqref{thm:converge2} then implies $\mathrm{KL}(\rho_t | \pi) \leq e^{- (t - t_*)}$ for $t \geq t_* = t_0 + O(\log d) + O(\log L)$.  In particular, the convergence rate does not depend on $L$ and the time lag $(t_* - t_0)$ is $O(\log L)$ rather than $O(L^2)$.

\subsection{Double well.}

Suppose $\pi(x) = Z^{-1} e^{-V(x)}$ with $V(x) = \epsilon^{-1} \cos^2(\pi x)$, $x \in [-1,1]$, for some $\epsilon > 0$.
Here we regard $\pi(x)$ as a density on the one-dimensional torus $\mathbb{T}^1 = [-1,1]$.
This density has two modes at $x = \pm 1/2$.  Moreover, $\max V - \min V = \epsilon^{-1}$.
It is known that for this potential $V$, the FPE dynamics \eqref{eq:fp1} exhibits a metastability phenomenon,
and the mixing time for pure Langevin dynamics is $O(e^{C \epsilon^{-1}})$ for $\epsilon < \!\!< 1$ (see \cite{BEGK04, BGK05, MS14}).

Suppose that the initial density $\rho_0$ is the restriction of $\pi(x)$ to the region $[-1,0]$:
\begin{equation}
\rho_0(x) = 2 \pi(x) \mathbbm{1}_{[-1,0]}. \label{rhoinit}
\end{equation}
Then $\mathrm{KL}(\rho_0 | \pi) = \log 2 < 1$.  If $\rho_t$ evolves according to pure Langevin dynamics \eqref{eq:fp1} for $t \in [0,1]$, a lower bound on the heat kernel (via a large deviation type estimate \cite{FW84}, or by \cite{Nor97}) implies that at $t = 1$,
\[
\inf_{x \in [-1,1]} \rho_1(x) \geq C_1 e^{- C_2 \epsilon^{-2}}
\]
for some postive constants $C_1$ and $C_2$. Then, suppose that for $t \geq t_0 = 1$, $\rho_t$ evolves according to the combined dynamics \eqref{eq:fp3}.  Theorem \eqref{thm:converge2} implies that for $t \geq t_* = 1 + O(|\log \epsilon|)$, we have $\mathrm{KL}(\rho_t | \pi) \leq e^{- (t - t_*)}$.   So, compared to the solution to the Langevin dynamics \eqref{eq:fp1},
the birth-death accelerated dynamics \eqref{eq:fp3} exhibits a dramatic acceleration and converges to $\pi(x)$ at a rate that is independent of $\epsilon$ after a brief delay of $O(\log \epsilon)$.

\section{An interacting particle implementation}\label{sec:particle}
As we mentioned earlier, the FPE \eqref{eq:fp1} has
a nice particle interpretation since it is the probability density
function of the Langevin diffusion \eqref{eq:oLan}.  The dynamics of
BDL-FPE \eqref{eq:fp3} does not have such a simple particle
interpretation, due to the logarithmic nonlinearity in the birth-death
term.  To resolve this difficulty, given a smooth kernel function
$K(x)$ approximating the Dirac delta, we might approximate
\eqref{eq:fp3} by the equation
\begin{equation}\begin{split}\label{eq:bdlk}
 & \partial_t \rho_t = \nabla \cdot (\nabla \rho_t + \rho_t \nabla V) - \Lambda(x,\rho_t)\rho_t,\\
 & \text{ where }
 \Lambda(x,\rho_t) = \log(K*\rho_t) - \log \pi - \int_{\R^d} \log \left(\frac{(K*\rho_t)}{\pi}\right)\rho_t  \,dx.
\end{split}\end{equation}
For this equation, the solution $\rho_t$ can be approximated by the empirical measure $\mu^N_t$ of
a collection of interacting particles $\{x^i_t\}_{i=1}^N$ evolving as follows ($\mu^N_t = \frac{1}{N} \sum_{i=1}^N \delta_{x^i_t}$):

Step 1: between birth/death events, each particle $x^i$ diffuses independently according to \eqref{eq:oLan}.

Step 2: each particle also has an independent exponential clock
with instantaneous birth-death rate
\begin{align}
\Lambda(x^i_t) & = \log \Big(\frac{1}{N} \sum_{j=1}^N K(x^i_t - x^j_t)\Big) -  \log \pi(x^i_t) - \frac{1}{N}\sum_{\ell=1}^N
\Big(\log \Big(\frac{1}{N} \sum_{j=1}^N K(x^\ell_t - x^j_t)\Big) -  \log \pi(x^\ell_t)\Big)  \nonumber \\
& = \log( (K*\mu^N_t) (x^i_t))  -  \log \pi(x^i_t) - \int_{\R^d}  \log \left(\frac{K*\mu^N_t}{\pi} \right) \,d\mu^N_t. \label{eq:lambdaxi}
\end{align}
Specifically, if $\Lambda(x^i_t) > 0$, then partial $x^i$ is killed with instantaneous rate
$\Lambda(x^i_t)$ and another particle is duplicated randomly to preserve the population size; if
 $\Lambda(x^i_t) < 0$, then partial $x^i$ is duplicated with instantaneous rate
$|\Lambda(x^i_t)|$ and another particle is killed randomly to preserve the population size.
Thus the total number of particles is preserved. The
proposition below shows convergence of the empirical measure $\mu^N_t$ of
the particle system described above to the solution of \eqref{eq:bdlk} in the large particle limit.
Its proof can be found in Appendix \ref{sec:proofcov}.
\begin{prop}\label{prop:mfl}
  Let $\mu^N_t$ be the empirical measure of particles defined 
  above.
 Assume that $\mu^N_0 \wgt \rho_0$ as $N\gt\infty$. Then for all $t\in (0,\infty)$,
$\mu^N_t \wgt \rho_t$ where $\rho_t$ solves \eqref{eq:bdlk} with initial condition $\rho_0$.
\end{prop}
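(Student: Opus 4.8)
The plan is to treat this as a standard mean-field/propagation-of-chaos statement and prove it via the martingale-problem method, the decisive simplification being that the mollification by the smooth kernel $K$ removes the singular logarithmic nonlinearity of \eqref{eq:fp3} and turns the birth-death rate into a well-behaved functional of the empirical measure. First I would fix a test function $\varphi \in C_c^\infty(\R^d)$ and derive the semimartingale decomposition of $t \mapsto \langle \mu^N_t, \varphi\rangle$. Step~1 (diffusion) contributes, by It\^o's formula, the drift $\langle \mu^N_t, \mathcal{L}\varphi\rangle$ with $\mathcal{L}\varphi = \Delta\varphi - \nabla V\cdot\nabla\varphi$ together with a continuous martingale. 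Step~2 (birth-death) contributes, through the generator of the associated pure-jump process — where killing $x^i$ and duplicating a uniformly chosen $x^j$ shifts the measure by $\tfrac1N(\delta_{x^j}-\delta_{x^i})$ — a compensator which, after a short computation that uses the mean-zero identity $\langle\mu^N_t,\Lambda(\cdot,\mu^N_t)\rangle = 0$ built into \eqref{eq:lambdaxi}, reduces to $-\tfrac{N}{N-1}\langle\mu^N_t,\Lambda(\cdot,\mu^N_t)\varphi\rangle$, plus a compensated jump martingale. Collecting terms yields
\[
\langle\mu^N_t,\varphi\rangle = \langle\mu^N_0,\varphi\rangle + \int_0^t \langle\mu^N_s,\mathcal{L}\varphi\rangle\,ds - \frac{N}{N-1}\int_0^t\langle\mu^N_s,\Lambda(\cdot,\mu^N_s)\varphi\rangle\,ds + M^{N}_t(\varphi),
\]
whose formal $N\to\infty$ limit is exactly the weak formulation of \eqref{eq:bdlk}.

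Next I would show that the martingale term is negligible and that the family is tight. Both the diffusive and the jump contributions to the predictable quadratic variation $\langle M^N(\varphi)\rangle_t$ carry a $1/N^2$ prefactor multiplying a sum of $N$ uniformly bounded terms, so $\mathbb{E}\bigl[(M^N_t(\varphi))^2\bigr] = O(1/N) \to 0$, provided the rates $\Lambda(x^i_t)$ stay uniformly bounded — this is precisely where the boundedness (above and below) of $K*\mu^N_t$ enters. For tightness, the one-time-marginal laws are tight in $\P(\R^d)$ by a uniform second-moment estimate: the birth-death step only relocates mass onto existing particle positions and therefore cannot increase moments, while the confining drift $-\nabla V$ controls the diffusion. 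Tightness in the path space $D([0,T];\P(\R^d))$ then follows from the decomposition above via Aldous' criterion applied to each $\langle\mu^N_\cdot,\varphi\rangle$, upgraded to the measure-valued level by Jakubowski's criterion.

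I would then identify every subsequential limit. Along a convergent subsequence $\mu^N_\cdot \Rightarrow \mu_\cdot$, the linear terms pass by continuity, and the crucial nonlinear term passes because $\mu \mapsto \langle\mu,\Lambda(\cdot,\mu)\varphi\rangle$ is continuous for the narrow topology: smoothness and boundedness of $K$ make $\mu \mapsto K*\mu$ narrowly continuous, and uniform positivity of $K*\mu$ makes $\log(K*\mu)$ continuous as well. Hence any limit $\mu_\cdot$ is a weak solution of \eqref{eq:bdlk} with $\mu_0 = \rho_0$. Finally, a Gronwall argument — again exploiting that $\mu \mapsto \Lambda(\cdot,\mu)$ is Lipschitz in a bounded-Lipschitz (or Wasserstein) metric thanks to the smoothness of $K$ — delivers uniqueness of this weak solution, so all subsequential limits coincide with $\rho_t$ and the full sequence converges, giving $\mu^N_t \wgt \rho_t$ for every $t$.

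The hardest part, I expect, is the uniform control of the birth-death rates $\Lambda(x^i_t)$ and the accompanying narrow continuity of $\mu \mapsto \langle\mu,\Lambda(\cdot,\mu)\varphi\rangle$, since both hinge on keeping $K*\mu^N_t$ bounded away from $0$ and $\infty$ along the dynamics. The upper bound is immediate from boundedness of $K$, but the lower bound — needed to tame $\log(K*\mu^N_t)$ in both the martingale estimate and the limit passage — is delicate when $K$ decays at infinity while $V$ grows, and is the point requiring the most care.
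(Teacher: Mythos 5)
Your proposal is correct in outline, but it takes a genuinely different---and more ambitious---route than the paper. The paper's own proof is explicitly \emph{formal}: it computes the generator $\mathcal{L}_N$ of the measure-valued process (a diffusion part plus the swap terms $\mu^N\{x \leftrightarrow x'\}$), passes to the limit $N \to \infty$ in the backward Kolmogorov equation while \emph{assuming} $\mu^N_t \to \rho_t$, and observes that the limiting generator reproduces exactly the evolution of observables along \eqref{eq:bdlk}; there is no tightness argument, no martingale estimate, and no uniqueness step. Your martingale-problem plan contains the same core algebra: your use of the mean-zero identity $\langle \mu^N_t, \Lambda(\cdot,\mu^N_t)\rangle = 0$ to collapse the jump compensator to $-\tfrac{N}{N-1}\langle \mu^N_t, \Lambda(\cdot,\mu^N_t)\varphi\rangle$ is precisely the cancellation the paper performs at the generator level in \eqref{eq:Ln} and the display following it (your $\tfrac{N}{N-1}$ is in fact slightly more careful, since the paper's generator implicitly samples the swap partner uniformly from all $N$ particles rather than the other $N-1$; the discrepancy is $O(1/N)$ and harmless). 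Where you differ is that you then do what a rigorous proof requires: $O(1/N)$ quadratic-variation bounds, tightness via Aldous/Jakubowski, identification of subsequential limits through narrow continuity of the nonlinear term, and uniqueness by Gronwall to upgrade to full-sequence convergence. The paper buys its brevity precisely by declaring the proof formal and skipping all of this.

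Two cautions on your plan, judged as a blueprint for a rigorous argument. First, the difficulties you flag at the end---uniform control of the rates $\Lambda(x^i_t)$ and narrow continuity of $\mu \mapsto \langle \mu, \Lambda(\cdot,\mu)\varphi\rangle$, both hinging on a lower bound for $K*\mu$---are indeed the crux, and they remain unresolved in your outline (as they are in the paper). Note that at particle locations one only has the bound $K*\mu^N_t(x^i_t) \geq K(0)/N$ from the self-interaction term in \eqref{eq:lambdaxi}, so the rates are a priori only $O(\log N)$ rather than $O(1)$; this still makes the martingale term vanish (quadratic variation $O(\log N/N)$), but the estimate must be stated this way. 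Second, your tightness argument contains a misstep: it is not true that the birth-death step ``cannot increase moments.'' Killing $x^i$ and duplicating a distant $x^j$ changes the empirical second moment by $\tfrac{1}{N}\bigl(|x^j|^2 - |x^i|^2\bigr)$, which can be positive; what one can control is the \emph{expected} change through the compensator, using that in the tails the confining growth of $V$ makes $\Lambda > 0$ there (killing dominates), so the drift of the second moment stays bounded. With these repairs, your route would deliver a genuinely stronger statement than the paper's formal derivation.
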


To implement the birth-death particle dynamics above in practice, we
also need time-discretization. In particular, discretizing the
Langevin diffusion by the Euler-Maruyama
scheme 
leads to the following birth-death accelerated Langevin sampler
(BDLS).

\begin{algorithm}[H]
 \caption{BDLS: birth-death accelerated Langevin sampler}\label{alg:ula-bd}
 \begin{flushleft}
        \textbf{Input:} A potential $V(x)$ corresponding to the target distribution $\pi(x)$, a set of initial particles $\{x^i_0\}_{i=1}^N$,
        number of iterations $J$, time step $\Delta t$, kernel function $K$\\
        \textbf{Output:} A set of particles $\{x^i_J\}_{i=1}^N$ whose empirical measure $\mu^N$ approximates $\pi$.
\end{flushleft}
 \textbf{for} $j = 1: J$ \textbf{ do }\\
 \hspace{1cm } \textbf{for} $i = 1: N$ \textbf{ do }\\
 \hspace{2cm} set $x^i_{j} = x^i_{j-1} - \Delta t  \nabla V(x^i_{j-1}) + \sqrt{2\Delta t} \xi_{j}$, where $\xi_j \stackrel{\textrm{i.i.d.}}{\sim} N(0,1)$ \\
 \hspace{2cm} calculate $\beta_i = \log\Big(\frac{1}{N}\sum_{\ell=1}^N K(x^i_j - x^\ell_j) \Big) + V(x^i_j)$ \\
 \hspace{2cm}  set $\bar{\beta}_i = \beta_i - \frac{1}{N}\sum_{\ell=1}^N \beta_\ell$\\
  \hspace{2cm} \textbf{if} $\bar{\beta}_i > 0$  \\
  \hspace{3cm} kill $x^i_j$ with probability $1 - \exp(-\bar{\beta}_i \Delta t)$,\\
 \hspace{3cm}  duplicate one particle that is uniformly chosen  from the rest\\
 \hspace{2cm} \textbf{else if} $\bar{\beta}_i < 0$\\
  \hspace{3cm} duplicate $x^i_j$ with probability $1 - \exp(\bar{\beta}_i \Delta t)$,\\
 \hspace{3cm}  kill one particle that is uniformly chosen  from the rest\\
 \hspace{2cm} \textbf{end if}\\
\hspace{1cm }\textbf{end for} \\

\textbf{end for } \end{algorithm}

\section{Numerical results}
In the numerical examples below, we compare the sampling efficiency of the proposed sampler BDLS of size $N$
 with the sampler built from running $N$ independent copies of ULA (we call it parallel ULA or simply ULA for short).
We choose the kernel $K$ to be the Gaussian kernel with width $h$, i.e. $K(x,y) = \frac{1}{(2\pi h^2)^{d/2}}\exp(-|x-y|^2/2h^2)$. The kernel width $h$ varies in
 different examples and is tuned to produce the best numerical performance. How to optimize the choice of $h$ with a sound theoretical basis is to be investigated in future work.
 \subsection{Example 1: multimodal distribution on a 1D torus.}  Consider
 the target $\pi(x) \propto \exp(-V(x))$ with
 $V(x) = 2.5\cos(2x) + 0.5\sin(x)$ defined on the torus
 $D:= [-2\pi, 2\pi]$. We initialize the continuous dynamics and
 particle systems according to the Gaussian distribution
 $\mathcal{N}(0,0.2)$. This makes sampling the measure $\pi$ difficult
 since $\pi$ has four modes on $D$, while $\rho_0$ is very peaked and
 almost does not overlap with $\pi$.  We show in the left figure of
 Figure \ref{fig:1d2} the convergence of KL-divergence
 $\mathrm{KL}(\rho_t | \pi)$, from which one sees that BDL-PDE
 \eqref{eq:fp3} substantially accelerates the slow convergence of FPE
 \eqref{eq:fp1} of Langevin diffusion, consistent with Theorem
 \ref{thm:converge2}.  The reason for fast convergence of BDE
 \eqref{eq:fp2} is unclear to us and will be investigated in the
 future.  To compare the particle algorithms, we plot in Figure
 \ref{fig:1d2} (the middle and right figures) the mean square errors
 (MSE) of BDLS and ULA in estimating the mean and variance of the
 target versus number of sample size.  We see that BDLS performs much
 better than ULA.  BDS performs the worst in this example (see
 snapshots in Figure \ref{fig:1d1}) as due to absence of diffusion the
 particles only rearrange themselves inside the small region around
 zero they initialize and never get out. Thus we do not plot the MSE
 of BDS as they are too large to be fitted in the same figure.  We
 choose the number of particles $N=100$, time step size
 $\Delta t = 0.03$ and a Gaussian kernel $K$ with width $h=0.05$ in
 this example. See Appendix \ref{sec:addex1} for more implementation details
 and additional numerical results.


\begin{figure}
  \centering
  \includegraphics[scale = 0.6]{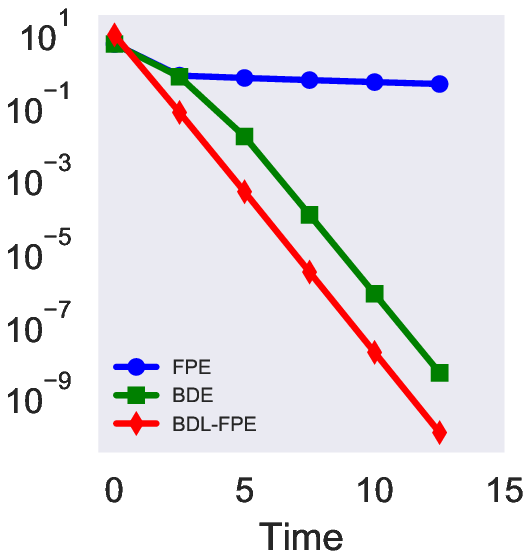}
 \includegraphics[scale = 0.6]{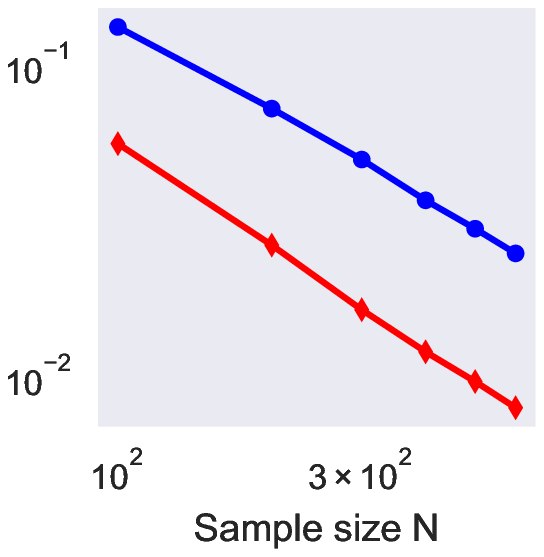}
 \includegraphics[scale = 0.6]{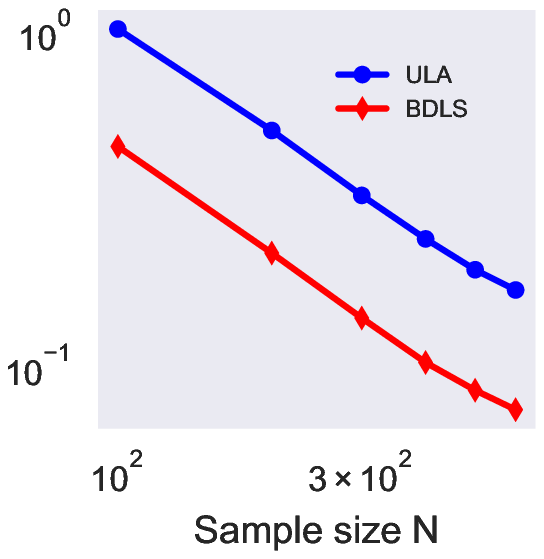}
\caption{Convergence of continuous dynamics and particles systems in Example 1. The left figure shows decay of the KL divergence in semilogy scale
along the evolution of three continuous dynamics. The middle (or the right) figure
shows the decay in loglog scale of mean square errors of estimating mean (or variance) using varying number of particles.}
\label{fig:1d2}\end{figure}

\subsection{Example 2: two dimensional Gaussian mixture.} Consider now a target of
two dimensional Gaussian mixture consisting of four  components
$
\pi(x,y) =\sum_{i=1}^4 w_i \mathcal{N}( m_i, \Sigma_i)
$
and initial particles
sampled from the Gaussian $\mathcal{N}(m_0, \Sigma_0)$, where
the parameters are defined by
\begin{equation*} \bea
& w_i = 1/4, i =1,\cdots,4,\ m_0 = m_1 = (0,8)^T,
m_2 = (0,2)^T,
m_3 = (-3,5)^T, m_4 = (3,5)^T,\\
& \Sigma_1 = \Sigma_2 = \begin{pmatrix}
              1.2 & 0\\
              0 & 0.01
             \end{pmatrix},
             \Sigma_3 = \Sigma_4 = \begin{pmatrix}
              0.01 & 0\\
              0&  2
             \end{pmatrix}, \Sigma_0 = \begin{pmatrix}
              0.3 & 0\\
              0 & 0.3
             \end{pmatrix}.
  \ena
\end{equation*}
In this example, we choose $N=10^3$ particles and use time step size $\Delta t = 10^{-3}$ for both ULA and BDLS algorithms.
Figure \ref{fig:gmm2d1} shows scatter plots along with  their corresponding marginals
of particles computed using parallel ULA  and BDLS  at different number of iterations.
The target distribution has a square shape and
the  particles are initialized within a small neighborhood of the top edge.
At the $10^4$-th iteration, the particles generated by BDLS already
start equilibrating around all modes, whereas only very few particles generated by parallel ULA reach to
the bottom mode at the same time.
We also compare the absolute error of estimating $\mathbb{E}_\pi[f]$ for different $f$ in  Figure \ref{fig:gmm2d2}.
We find that the estimation errors of using our BDLS converge to the lowerest values much faster than ULA.


\begin{figure}
  \raggedleft
  \includegraphics[scale=0.6]{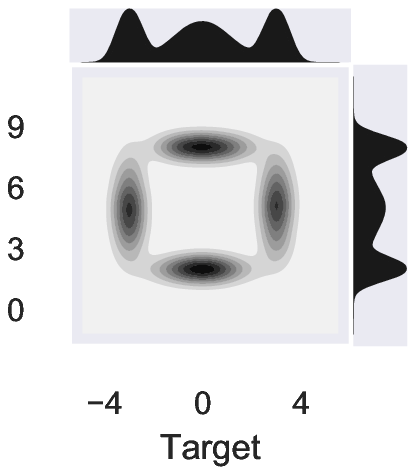}\hspace{-0.1in}
  \includegraphics[scale=0.6]{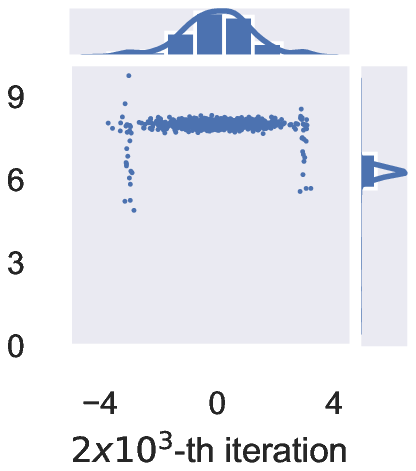}\hspace{-0.1in}
  \includegraphics[scale=0.6]{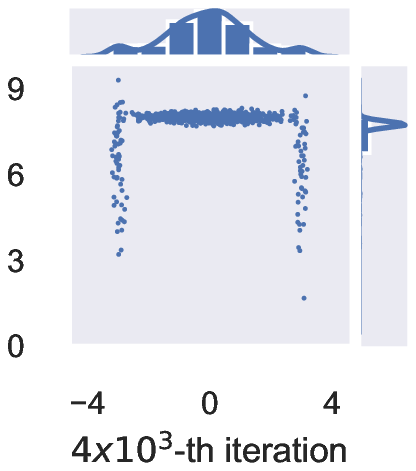}\hspace{-0.1in}
  \includegraphics[scale=0.6]{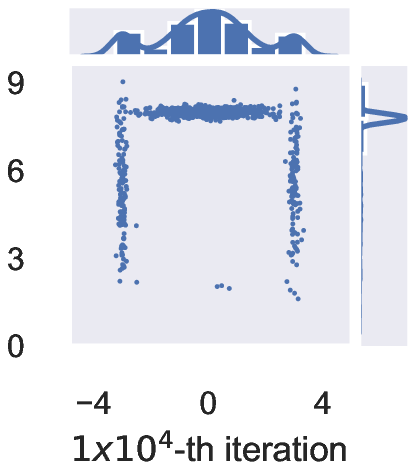}\hspace{-0.1in}
  \includegraphics[scale=0.6]{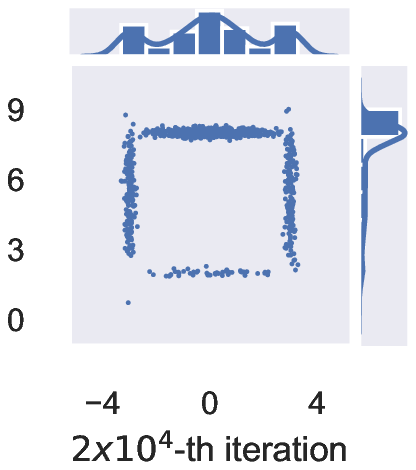}\hspace{-0.1in}

  \includegraphics[scale=0.6]{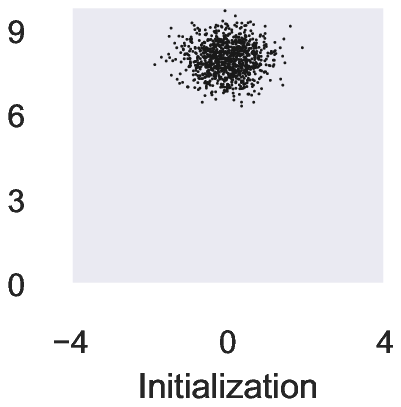}\hspace{-0.1in}
  \includegraphics[scale=0.6]{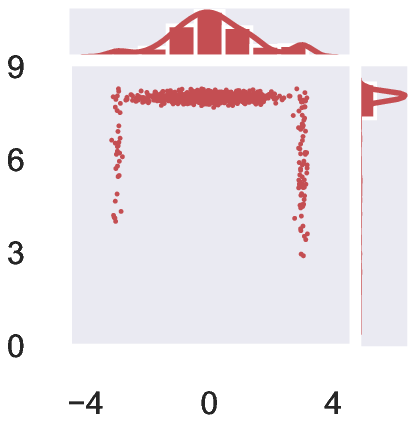}\hspace{-0.1in}
  \includegraphics[scale=0.6]{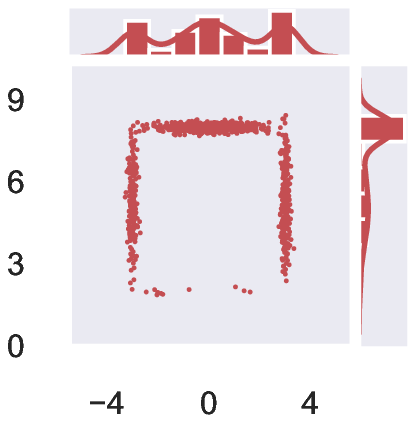}\hspace{-0.1in}
  \includegraphics[scale=0.6]{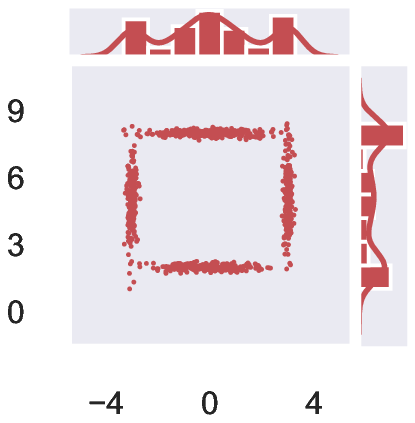}\hspace{-0.1in}
  \includegraphics[scale=0.6]{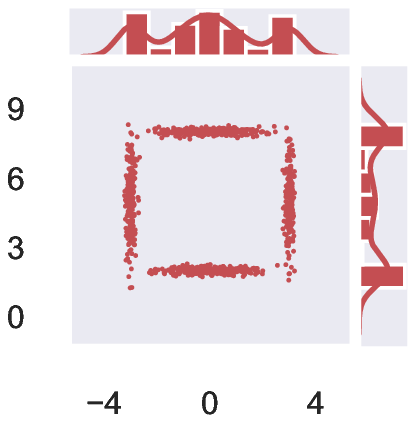}\hspace{-0.1in}
   \caption{Scatter plots of particles and their marginal distributions (computed by kernel density estimators) in Example 2.
   Top left figure displays the target density and the bottom left shows initial locations of particles.
   Each column in the rest shows the scatter plots and the marginal distributions
   of particles computed using parallel ULA (top, blue) and BDLS (bottom, red) at different iterations.}
    \label{fig:gmm2d1}
\end{figure}

\begin{figure}
  \centering
 \begin{subfigure}[t]{0.23\textwidth}
 \includegraphics[scale=0.8]{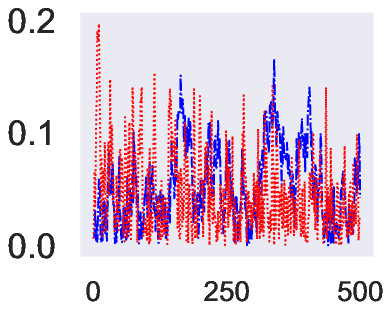}
  \caption{\tiny Estimating $\mathbb{E}[x]$}
 \end{subfigure}\hspace{0.05in}
  \begin{subfigure}[t]{0.23\textwidth}
  \includegraphics[scale=0.9]{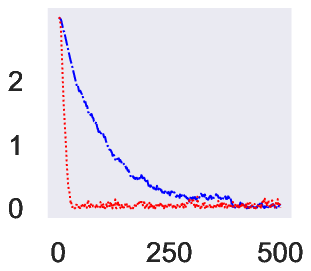}
  \caption{\tiny Estimating $\mathbb{E}[y]$}
   \end{subfigure}\hspace{-0.05in}
       \begin{subfigure}[t]{0.2\textwidth}
   \includegraphics[scale=0.9]{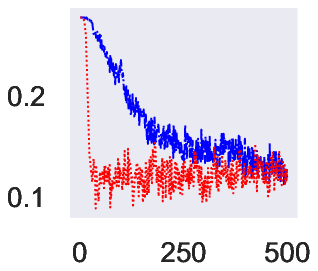}
     \caption{\tiny Estimating $\mathbb{E}[\chi(x,y)]$}
   \end{subfigure}
     \begin{subfigure}[t]{0.3\textwidth}
  \includegraphics[scale=0.9]{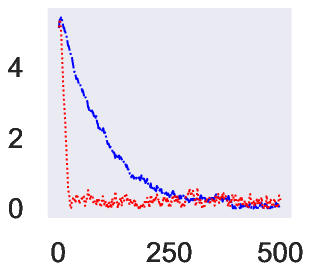}
  \caption{\tiny Estimating $\mathbb{E}[x^2/3 + y^2/5]$}
   \end{subfigure}\hspace{-0.05in}

   \caption{The absolute errors of estimating
   $\mathbb{E}[f(x,y)]$ with various observables $f$ in Example 2. In the third figure
   $\chi(x,y) = \mathbf{1}_{|x|\leq 5, |y-2|\leq 0.8}$. The blue dash-dot
   and red-dot lines
   are estimation errors along iterations using ULA and BDLS respectively. The total number of iterations is $2\times 10^{5}$. For the purpose of resolution, we plot the error for every 400 iterations.}
   \label{fig:gmm2d2}
\end{figure}

\subsection{Example 3: Bayesian learning of Gaussian mixture model.}
We consider the Bayesian approach to fitting the distribution of a dataset with
a univariate Gaussian mixture model of three components in the same setting as in  \cite{Chopin2012}.
The unknown parameters are the means $\mu_k$, precisions $\lambda_k$ and the weights
$w_k,k=1,2,3$ with $\sum_{k=1}^3 w_k =1$. We use the prior as in  \cite{Chopin2012} which has a hyperparamter $\beta$ describing the prior distribution of the precisions, thus
defining a posterior distribution $\pi$ on $\R^9$.
Due to the permutation invariance  with respect to the component label,
the resulting posterior has at least $3!= 6$ modes.
We generate a synthetic dataset of 200 samples from the mixture
measure with ``true'' parameters
$ w_1 = w_3=1/5, w_2 = 3/5, \mu_1 = -5,\mu_2 = 1, \mu_3 =6,
\lambda_k=1,k=1,2,3.  $ The data size is large enough to make the
posterior peaked so that hopping across different modes is
challenging.  We use $N=2000$ particles, time step size
$\Delta t = 1.5\times 10^{-6}$ and kernel width $h=1.1$. We initialize
particles as iid samples from the following distributions:
$(w_1,w_2)\in \text{Dirichlet}_3(1,1,1), \mu_k\sim \text{Unif}([3,7]),
\lambda_k \sim \text{Unif}([0.5,2.5]), \beta\sim
\text{Unif}([0.5,1.5])$.  To compare the performance of BDLS and that of ULA, we
show the evolution of sampling particles in $(\mu_1,\mu_2)$-coordinate
in Figure~\ref{fig:bayesgmm1} (see also Figure~\ref{fig:bayesgmm2} for
snapshots in $(w_1,w_2)$-coordinate). We see that BDLS algorithm
exhibits stronger mode exploration ability than ULA. Once all modes
are identified, BDLS quickly redistributes the particles in different
local modes towards the equilibrium through the birth-death process,
while ULA takes much longer time to equilibrate in the local modes.
In fact, the distribution of the BDLS particles in $(\mu_1,\mu_2)$ at
$2\times 10^4$-th iteration is already very close to the equilibrium
(see Figure~\ref{fig:bayesgmm3}). Appendix \ref{sec:addex3} contains further
details about the model and numerical results for this example.

\begin{figure}
  \raggedleft
  \includegraphics[scale=0.65]{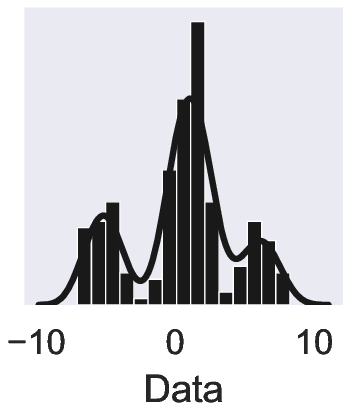}
  \includegraphics[scale=0.6]{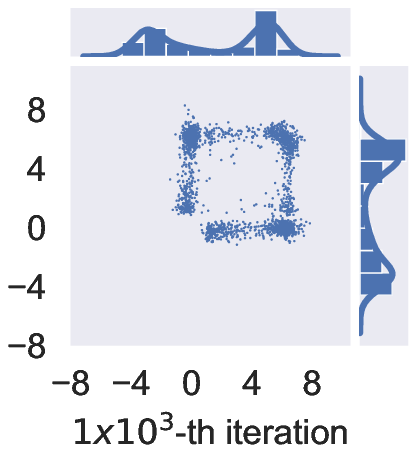}\hspace{-0.1in}
  \includegraphics[scale=0.6]{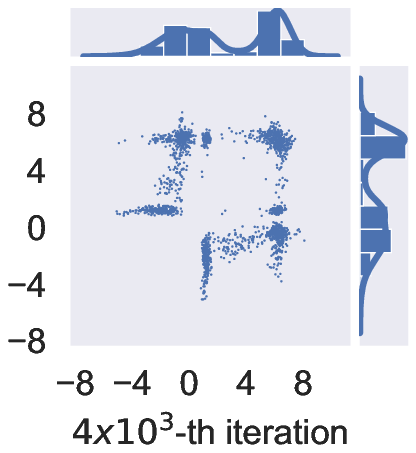}\hspace{-0.1in}
  \includegraphics[scale=0.6]{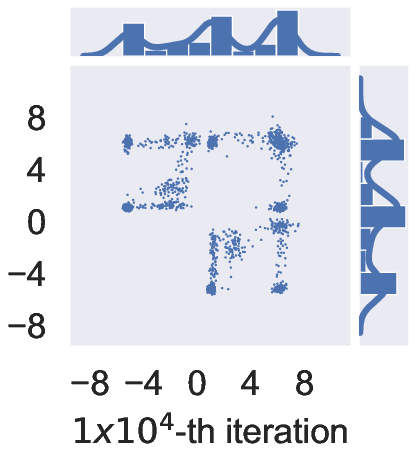}\hspace{-0.1in}
  \includegraphics[scale=0.6]{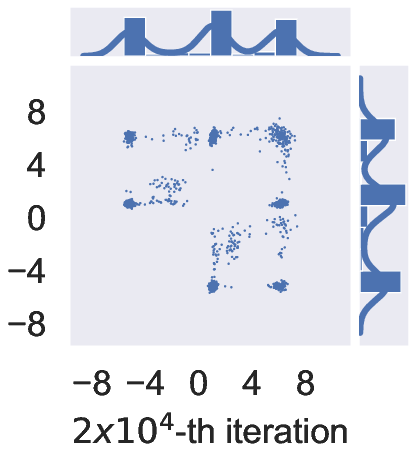}\hspace{-0.1in}
  \includegraphics[scale=0.6]{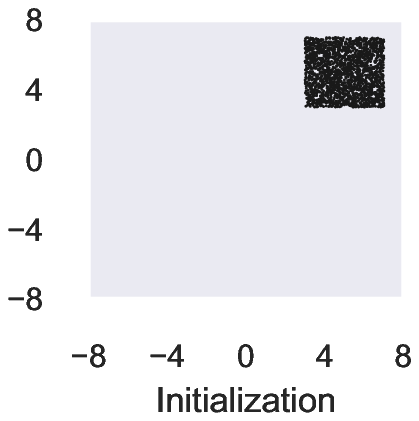}
  \includegraphics[scale=0.6]{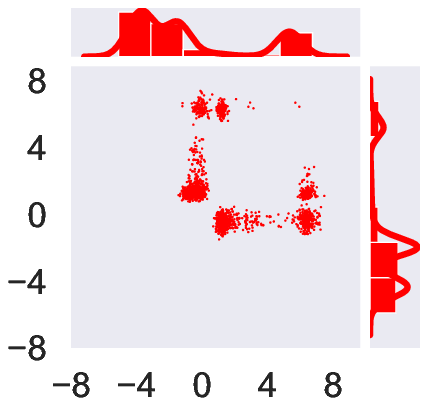}\hspace{-0.1in}
  \includegraphics[scale=0.6]{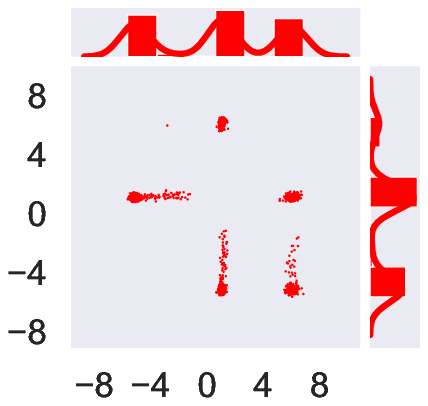}\hspace{-0.1in}
  \includegraphics[scale=0.6]{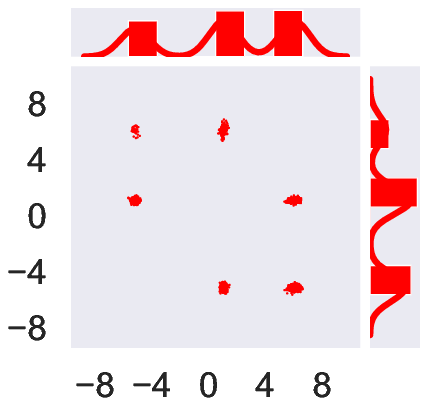}\hspace{-0.1in}
  \includegraphics[scale=0.6]{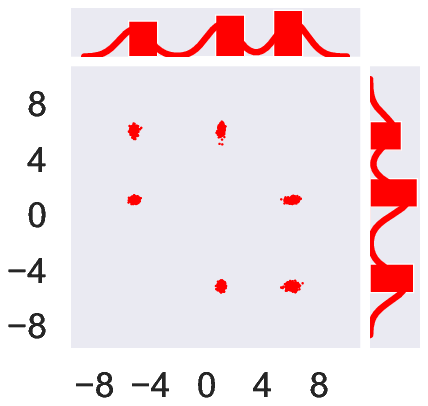}\hspace{-0.1in}
   \caption{Evolution of particles  in $(\mu_1,\mu_2)$-coordinate for  Example 3. The first column shows the histogram (top) of the synthetic data
   and the initial locations (bottom) of particles in $(\mu_1,\mu_2)$-coordinate.
   The rest columns compare the scatter plots of particles in $(\mu_1,\mu_2)$ and their marginals computed using parallel ULA (top, blue) and BDLS (bottom, red)
   at different iterations.}
 \vspace{-1em}
    \label{fig:bayesgmm1}
\end{figure}

\section{Conclusion}
We propose a new sampling dynamics based on birth-death process and an
algorithm based on interacting particles to accelerate the classical
Langevin dynamics for statistical sampling. Future directions include
a rigorous analysis of the birth-death accelerated Langevin sampler
and its further applications when combined with other conventional
sampling schemes.




\newpage

\appendix
\numberwithin{equation}{section}
\renewcommand\thefigure{\thesection.\arabic{figure}}

\section{Gradient flow structure}\label{sec:gradientflow}
This section devotes to the proof of Theorem \ref{thm:gf}.
 We mainly follow \cite{Otto01_porous} and \cite{GM17}.

We first introduce a Riemannian structure, denoted by $\M$
 on the space of smooth probability densities on $\R^d$.
Consider the tangent space at $\rho \in \M$
$$
\mathcal{T}_\rho \M := \Big\{ \text{ functions }\zeta \text{ on } \R^d \text{ satisfying } \int \zeta dx = 0 \Big\}.
$$
Since $\rho \geq 0$ is a probability density, the tangent space can also be identified as
$$
\mathcal{T}_\rho \M := \Big\{\zeta = - \nabla \cdot (\rho \nabla u )
+ \rho \Big(u - \int u \rho dx \Big)\Big\}.
$$
Indeed, there is ``one-to-one'' correspondence between $\zeta$ and $u $, since for any $\zeta$ such that $\int \zeta dx = 0$,
there exists $u \in H^1(d\rho)$ (determined uniquely up to a constant) solving
$$
\zeta = - \nabla \cdot (\rho \nabla u) + \rho \Big(u - \int u \rho dx \Big).
$$
Here  $H^1(d\rho)$ denotes the space of functions $u$ such that $\|u\|^2_{H^1(d\rho)} := \int (|\nabla u|^2 + |u|^2) d\rho < \infty$.
 Informed by the
Lagrangian minimization in the definition of the Wasserstein-Fisher-Rao distance \eqref{eq:dwfr},
we also define the Riemannian metric tensor $g_\rho(\cdot, \cdot): T_\rho \M \times T_\rho \M \gt \R$ by
\be\label{eq:Rie}
\bea
g_\rho( \zeta_1, \zeta_2 )  & := \int_{\R^d} \rho  \nabla u_1 \cdot \nabla u_2 dx + \int_{\R^d} \rho \Big(u_1 - \int u_1 \rho dx \Big)  \Big(u_2 - \int u_2 \rho dx \Big) dx\\
& =\int_{\R^d} \rho  \nabla u_1 \cdot \nabla u_2 dx + \int_{\R^d} u_1 u_2 \rho dx -  \int u_1 \rho dx \cdot \int u_2 \rho dx,
\ena
\en
where $\zeta_i = - \nabla \cdot (\rho \nabla u_i ) +\rho \Big(u_i - \int u_i \rho dx \Big), i=1,2$.
With this metric tensor $g_\rho$, the Wasserstein-Fisher-Rao distance defined by \eqref{eq:dwfr}  can be regarded as the geodesic distance on $\M$ with the  Riemannian metric $g_\rho$, namely,
\begin{equation}
\begin{split}
& d^2_{\text{WFR}} (\rho_0 , \rho_1) =\Big\{ \inf_{u_t}\int_0^1 g_{\rho_t}(\dot{\rho}_t, \dot{\rho}_t) dt\\
& \text{ s.t. } \dot{\rho}_t = - \nabla \cdot (\rho_t \nabla u_t ) + \rho_t \Big(u_t - \int_{\R^d} u_t \rho_t dx \Big)
,\quad \rho_t |_{t=0} = \rho_0, \quad   \rho_t |_{t=1} = \rho_1\Big\}.
\end{split}
\end{equation}

\begin{prop}\label{prop:gradF}
Let $\F : \M \gt \R$ be a continuous and differentiable energy functional. Then
the metric gradient of $\F (\rho)$  via the metric tensor $g_\rho$ is
\be\label{eq:gradF}
\text{grad}\, \F(\rho) = -\nabla\cdot \Big(\rho \frac{\delta \F(\rho)}{\delta \rho} \Big) + \rho\Big( \frac{\delta \F(\rho)}{\delta \rho} - \int \frac{\delta \F(\rho)}{\delta \rho} \rho dx\Big)
\en
As a result, the gradient flow of $\F (\rho)$
with respect to
the  Wasserstein-Fisher-Rao distance $d_{\text{WFR}}$ is given by
\be\label{eq:gradF2}
\begin{split}
\partial_t \rho & = -\text{grad}\, \F(\rho) \\
& = \nabla\cdot \Big(\rho \frac{\delta \F(\rho)}{\delta \rho} \Big) - \rho\Big( \frac{\delta \F(\rho)}{\delta \rho} - \int \frac{\delta \F(\rho)}{\delta \rho} \rho dx\Big).
\end{split}
\en
\end{prop}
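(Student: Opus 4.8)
The plan is to characterize $\text{grad}\,\F(\rho)$ through the defining property of a Riemannian gradient: it is the unique tangent vector for which
\[
g_\rho(\text{grad}\,\F(\rho), \zeta) = D\F(\rho)[\zeta] \quad \text{for all } \zeta \in \T_\rho\M,
\]
where $D\F(\rho)[\zeta] = \int \frac{\delta\F(\rho)}{\delta\rho}\,\zeta\,dx$ denotes the first variation of $\F$ along $\zeta$. Since $g_\rho(\zeta,\zeta) = \int\rho|\nabla u|^2\,dx + \int\rho\bigl(u - \int u\rho\,dx\bigr)^2\,dx$ vanishes only when $u$ is constant, i.e.\ when $\zeta = 0$, the form $g_\rho$ is a genuine inner product on $\T_\rho\M$, so the Riesz representation theorem guarantees that such a gradient exists and is unique. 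It therefore suffices to verify that the expression on the right-hand side of \eqref{eq:gradF} satisfies this identity.

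First I would write the candidate gradient in potential form. Setting $\phi := \frac{\delta\F(\rho)}{\delta\rho}$, the right-hand side of \eqref{eq:gradF} is precisely the tangent vector associated to the potential $\phi$ under the correspondence $\zeta \leftrightarrow u$ recalled above. Next, for an arbitrary $\zeta \in \T_\rho\M$ with potential $u$, I would expand the first variation using $\zeta = -\nabla\cdot(\rho\nabla u) + \rho\bigl(u - \int u\rho\,dx\bigr)$ and integrate by parts to obtain
\[
\int \phi\,\zeta\,dx = \int \rho\,\nabla\phi\cdot\nabla u\,dx + \int\rho\,\phi\Bigl(u - \int u\rho\,dx\Bigr)\,dx.
\]

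Then I would compare this with $g_\rho(\text{grad}\,\F(\rho),\zeta)$. Evaluating the first form of the metric tensor in \eqref{eq:Rie} with $u_1 = \phi$ and $u_2 = u$, the centering identity $\int\rho\bigl(u - \int u\rho\,dx\bigr)\,dx = 0$ lets the second term collapse to $\int\rho\,\phi\bigl(u - \int u\rho\,dx\bigr)\,dx$, so the two expressions agree term by term. Hence $g_\rho(\text{grad}\,\F(\rho),\zeta) = \int\phi\,\zeta\,dx$ for every $\zeta$, which is the defining identity and establishes \eqref{eq:gradF}. The gradient-flow equation \eqref{eq:gradF2} then follows immediately upon setting $\partial_t\rho = -\text{grad}\,\F(\rho)$.

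The step requiring the most care, and the main technical obstacle, is the functional-analytic foundation underlying this computation: one must justify the integration by parts with no boundary contribution, which requires $\rho$ (and hence $\rho\,\phi\,\nabla u$) to decay suitably at infinity, and one must secure the well-posedness of the correspondence $\zeta \leftrightarrow u$, i.e.\ unique solvability of $\zeta = -\nabla\cdot(\rho\nabla u) + \rho\bigl(u - \int u\rho\,dx\bigr)$ in $H^1(d\rho)$ modulo constants. The latter is a Lax--Milgram argument resting on the coercivity of $g_\rho$ on this quotient space. Once these points are in place, the remainder is the routine matching of terms described above.
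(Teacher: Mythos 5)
Your proposal is correct and follows essentially the same route as the paper: both characterize the gradient by the defining identity $g_\rho(\text{grad}\,\F(\rho),\zeta)=\int \frac{\delta\F(\rho)}{\delta\rho}\,\zeta\,dx$, expand $\zeta$ in its potential form, integrate by parts, and use the centering identity $\int\rho\bigl(u-\int u\rho\,dx\bigr)dx=0$ to recognize the result as the metric pairing with the tangent vector whose potential is $\frac{\delta\F(\rho)}{\delta\rho}$. Your added remarks on uniqueness via Riesz representation, the Lax--Milgram solvability of the $\zeta\leftrightarrow u$ correspondence, and the vanishing of boundary terms make explicit some points the paper leaves implicit, but they do not change the argument.
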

\begin{proof}
  Let $\rho_t: t \gt \rho_t$ be a $C^1$ curve passing through
  $\rho_t |_{t=0} = \rho \in \P(\R^d)$ with tangent vector
  $$
  \frac{d \rho_t}{d t}|_{t=0} = \zeta =  - \nabla \cdot (\rho \nabla u) +\rho \Big(u - \int u \rho dx \Big).
  $$
  The gradient $\text{grad}\, \F$ with respect to $g_\rho(\cdot, \cdot)$ is defined by
  \be\label{eq:grad E}
  g_\rho( \text{grad}\, \F(\rho), \zeta )  = \frac{d \F(\rho_t)}{dt} |_{t=0}
  = \int \frac{\delta \F(\rho)}{\delta \rho} \zeta dx .
  \en
  By the  definition of the  Riemannian metric $g_\rho(\cdot, \cdot)$ in \eqref{eq:Rie}, the right hand side above is
    $$
  \bea
   \int \frac{\delta \F(\rho)}{\delta \rho} \zeta dx & = \int \frac{\delta \F(\rho)}{\delta \rho} \Big[ - \nabla \cdot (\rho \nabla \alpha) + \rho\Big(\alpha - \int \alpha \rho dx \Big)\Big] dx\\
  & = \int \rho \nabla \frac{\delta \F(\rho)}{\delta \rho}  \cdot \nabla \alpha  + \rho \Big(\frac{\delta \F(\rho)}{\delta \rho} - \int \frac{\delta \F(\rho)}{\delta \rho} \rho dx\Big) \alpha dx.\\
  & = \int  \rho \nabla \frac{\delta \F(\rho)}{\delta \rho}  \cdot \nabla \alpha  +  \rho \Big(\frac{\delta \F(\rho)}{\delta \rho} - \int \frac{\delta \F(\rho)}{\delta \rho} \rho dx\Big) \Big(\alpha - \int \alpha \rho dx \Big) dx.\\
 & = g_\rho\Big(-\nabla\cdot \Big(\rho \frac{\delta \F(\rho)}{\delta \rho} \Big) + \Big(\frac{\delta \F(\rho)}{\delta \rho} - \int \frac{\delta \F(\rho)}{\delta \rho} \rho dx\Big), \zeta\Big)
 \ena
  $$
  Since $\zeta$ is arbitrary, this proves \eqref{eq:gradF} and hence
  \eqref{eq:gradF2} follows.
\end{proof}

\begin{proof}[Proof of Theorem \ref{thm:gf}]
This is a direct consequence of Proposition \ref{prop:gradF} and the fact that the functional derivative of $\rho \mapsto \mathrm{KL}(\rho | \pi)$ is
$$
\frac{\delta \mathrm{KL}(\rho | \pi)}{\delta \rho} = \log \Big(\frac{\rho}{\pi}\Big) + 1.
$$
\end{proof}

\section{Proofs of convergence results}\label{sec:proofcov}
In this section we prove  Theorem \ref{thm:conv2}, Theorem \ref{thm:converge2}, Theorem \ref{thm:conv1}, and Proposition \ref{prop:mfl}.

\begin{proof}[Proof of Theorem \ref{thm:conv2}]
 Differentiating $\mathrm{KL}(\rho_t | \pi)$ in time gives
 \ben
 \frac{d}{dt} \mathrm{KL}(\rho_t | \pi)  = - \mathcal{I}(\rho_t | \pi)
 -\Big(\int \rho_t \big|\log \frac{\rho_t}{\pi}dx \big|^2 - \Big( \int \rho_t \log \frac{\rho_t}{\pi} dx\Big)^2 \Big).
 \enn
 Then the theorem is proved by using \eqref{eq:log-Sobolev} and the fact the second term on the right side above is non-positive due to the Cauchy-Schwartz inequality.
\end{proof}

\begin{proof}[Proof of Theorem \ref{thm:converge2}] It suffices to assume $t_0 = 0$.
First, we claim that if \eqref{fLower1} holds, then for all $t > 0$:
\begin{equation}
\inf_{x \in \mathbb{R}^d} \frac{\rho_t(x)}{\pi(x)} \geq  e^{- M e^{-t}}. \label{fLower2}
\end{equation}
This is because the function $\eta_t(x) = \log(\rho_t(x)/\pi(x))$ satisfies
\begin{equation}
\partial_t \eta = \Delta \eta + b(t,x) \cdot \nabla \eta  - \eta + \mathrm{KL}(\rho_t|\pi)  \geq \Delta \eta + b(t,x) \cdot \nabla \eta  - \eta \label{etapde1}
\end{equation}
where $b(t,x) = \nabla \log \rho_t(x)$.  By the maximum principle, the minimum of $\eta$, which must be negative, cannot decrease.  In fact, \eqref{fLower1} and \eqref{etapde1} implies $\eta_t(x) \geq e^{-t} \inf_{x} \eta_0(x) \geq - M e^{-t}$ so that $\rho_t(x)/\pi(x) \geq e^{- M e^{-t}}$, which is \eqref{fLower2}.  In particular, if $t \geq t_1 = |\log(\delta/M)|$, then we have
\begin{equation}
\inf_{x \in \mathbb{R}^d}\rho_t(x)/\pi(x) \geq e^{-\delta}. \label{fLower3}
\end{equation}

Now, under the evolution \eqref{eq:fp3}, the time derivative of $\mathrm{KL}(\rho_t | \pi)$ is
\be
 \frac{d}{dt} \mathrm{KL}(\rho_t | \pi)  = \underbrace{- \mathcal{I}(\rho_t | \pi)}_{\leq 0} - \int \rho_t   \Big|\log \Big(\frac{\rho_t}{\pi}\Big)\Big|^2 dx
      + \mathrm{KL}(\rho_t | \pi)^2. \label{eq:dkl}
\en
We may ignore the the first term on the right side since it is non-positive. Define $f_t = \frac{\rho_t}{\pi} - 1 \geq -1$. Then,
\[
\mathrm{KL}(\rho_t | \pi) = \int \rho_t \log\Big(\frac{\rho_t}{\pi}\Big) \,dx =\int \left((1 + f_t) \log(1 + f_t)  - f_t \right) \pi\,dx
\]
and
\[
\int \rho_t  \Big|\log \Big(\frac{\rho_t}{\pi}\Big)\Big|^2  dx = \int ( 1 + f_t) |\log(1 + f_t)|^2 \,\pi \,dx.
\]
Observe that the functions $H_1(f) = (1 + f)\log(1 + f) - f$ and $H_2(f) = (1 + f) |\log(1 + f)|^2$ are both non-negative for $f \geq - 1$, $H_1'(0) = H_2'(0) = 0$, and $H_1''(f) = 1/(1 + f)$ and $H_2''(f) = 2(\log(1 + f) + 1)/(1 + f) \geq (2 - 2\delta) H_1''(f)$ if $f \geq e^{-\delta} - 1$. Therefore, we have
\[
(2 - 2\delta) H_1(f) \leq H_2(f), \quad \quad \text{if}\;\; f \geq  e^{-\delta} - 1.
\]
The condition \eqref{fLower3} implies $\inf f_t(x) \geq e^{-\delta} - 1$ for all $t \geq t_1$. Combining these observations with \eqref{eq:dkl}, we see that
\begin{align}
 \frac{d}{dt} \mathrm{KL}(\rho_t | \pi) & \leq - \int H_2(f_t) \pi \,dx +  \mathrm{KL}(\rho_t | \pi)^2 \nonumber \\
& \leq - \int (2 - 2 \delta) H_1(f_t) \pi \,dx +  \mathrm{KL}(\rho_t | \pi)^2 \nonumber \\
& = - (2 - 2\delta) \mathrm{KL}(\rho_t | \pi)  + \mathrm{KL}(\rho_t | \pi)^2 \label{eq:dkl2}
\end{align}
holds for all $t \geq t_1$. Since $\mathrm{KL}(\rho_t | \pi) \leq \mathrm{KL}(\rho_0 | \pi) \leq 1$ also holds, by assumption, this implies
\be
 \frac{d}{dt} \mathrm{KL}(\rho_t | \pi)  \leq  - (2 - 2\delta) \mathrm{KL}(\rho_t | \pi)  +  \mathrm{KL}(\rho_t | \pi) \leq - \frac{1}{2} \mathrm{KL}(\rho_t | \pi)
\en
for all $t \geq t_1$, so that $\mathrm{KL}(\rho_t | \pi) \leq e^{- \frac{1}{2}(t- t_1)}\mathrm{KL}(\rho_0 | \pi) \leq e^{- \frac{1}{2}(t - t_1)}$.  Now, returning to \eqref{eq:dkl2}, we have
\be
 \frac{d}{dt} \mathrm{KL}(\rho_t | \pi)  \leq  - (2 - 2\delta) \mathrm{KL}(\rho_t | \pi) + e^{- \frac{1}{2}(t - t_1)} \mathrm{KL}(\rho_t | \pi)\leq- (2 - 3\delta) \mathrm{KL}(\rho_t | \pi)
\en
for all $t \geq t_1 + 2 |\log(\delta)| = \log(\frac{M}{\delta^3}) = t_*$.

\end{proof}

\begin{proof}[Proof of Theorem \ref{thm:conv1}]
If $\rho_t$ satisfies \eqref{eq:fp2}, then
 \begin{equation}
 \frac{d}{dt} \mathrm{KL}(\rho_t | \pi)  =  -\Big(\int \rho_t \big|\log \frac{\rho_t}{\pi}dx \big|^2 - \Big( \int \rho_t \log \frac{\rho_t}{\pi} dx\Big)^2 \Big) \leq 0.
 \end{equation}
So, $\mathrm{KL}(\rho_t | \pi)$ is non-increasing and hence is finite since $\mathrm{KL}(\rho_0 | \pi)$ is finite. By the monotone convergence theorem there exists $C_* \geq 0$ such that
 $\lim_{t\gt \infty}  \mathrm{KL}(\rho_t | \pi) = C_*$.

 Next we show that the solution $\rho_t(x) > 0$ if $\rho_0(x) > 0$.
 Let us denote $\eta_t(x) = \log (\rho_t(x)/\pi(x))$. Then $\eta_t$ solves the equation
 \be\label{eq:ht}
 \partial_t \eta_t = - \eta_t   +  \mathrm{KL}(\rho_t | \pi).
 \en
Hence, $\eta$ satisfies the relation
 \be\label{eq:ht2}
 \eta_t(x) = e^{-t} \eta_0(x) + \int_0^t e^{-(t-s)} \mathrm{KL}(\rho_s | \pi) ds, \quad \forall x\in \R^d, t > 0.
 \en
In particular, $t \mapsto \eta_t(x)$ is increasing if $\eta_t(x) < 0$,
which implies that $\eta_t(x) \geq \min(0,\eta_0(x)) > -\infty$.
As a result, $\rho_t(x) = e^{\eta_t(x)}\pi(x) > 0, \  \forall x\in \R^d$.
In addition, since $0 \leq \mathrm{KL}(\rho_t | \pi) \leq \mathrm{KL}(\rho_0 | \pi) <\infty$ and $\mathrm{KL}(\rho_t | \pi)\to C_*$, we have
 \be\label{eq:ht3}
 \lim_{t\gt \infty}  \int_0^t e^{-(t-s)} \mathrm{KL}(\rho_s | \pi)ds  = C_*.
 \en
Because of this and \eqref{eq:ht2}, we conclude that for any $x\in \R^d$, $\eta_t(x) \gt C_*$ as $t\gt\infty$, and $\rho_t(x) \gt e^{C_*}\pi(x)$ as $t\gt\infty$. However, since both $\pi$ and $\rho_t$ are probability densities, this implies $e^{C_*} = 1$, so that $C_* = 0$.  Thus, $\lim_{t\gt \infty} \rho_t(x)  = \pi(x)$ and $\mathrm{KL}(\rho_t | \pi) = C_* = 0$.
\end{proof}

\begin{proof}[Proof of Proposition \ref{prop:mfl}]
 We give a formal proof using the theory of measure-valued Markov process \cite{D93}; a similar proof strategy is used recently in \cite{Rotskoff2019neuron}.
 Our goal is to first derive the (infinite dimensional) generator and  the backward
 Kolmogorov equation of the measure-valued Markov process of $\mu^N_t = \frac{1}{N}\sum_{i=1}\delta_{x^i(t)}$.
To this end, let us define  for any smooth functional $\Psi : \P(\R^d) \gt \R$ the generator $\mathcal{L}_N$
$$
(\mathcal{L}_N \Psi) (\mu^N) := \lim_{t\downarrow0} \frac{\mathbb{E}_0 \Psi(\mu^N_t) - \Psi(\mu^N) }{t},
$$
where $\mathbb{E}_0 $ denotes the expectation of $\Psi(\mu^N_t) $ conditioned on that $\mu^N_0  = \mu^N$.
To evaluate the limit above, notice that by definition
whenever a particle $x^i_t$ is killed (or duplicated) at time $t$,
another particle $x^j_t$ is duplicated (or killed) instantaneously. The birth or death is dictated by
the sign of the birth-death rate $\Lambda (x^i)$ defined by \eqref{eq:lambdaxi}.
As a result, the instantaneous change from $\mu_{t_-}^N$ to $\mu_t^N$ is
$$
\mu_t^N - \mu_{t_-}^N = -\frac{1}{N} \text{sign}(\Lambda(x^i)) (\delta_{x^i} - \delta_{x^j}).
$$
It is thus useful to define the empirical measure after a swap happens between $x$ and $x^\prime$ at time $t$ by
\begin{equation}\label{eq:mutnx}
 \mu_t^{N}\{x \leftrightarrow x^\prime\} = \mu_t^{N} - \frac{1}{N} \text{sign}(\Lambda(x)) (\delta_{x} - \delta_{x^\prime}).
\end{equation}
Since the particles are undergoing Langevin diffusions independently before a swap occurs between $x^i$ and $x^j$ occurs with an exponential rate $\Lambda(x^i)$, we can derive that
\begin{equation}
\begin{split}
 (\mathcal{L}_N \Psi) (\mu^N) & = \frac{1}{N} \sum_{i=1}^N \int \Big(\nabla_x D_{\mu^N}\Psi(x^i) \cdot \nabla \log \pi (x^i) + \Delta_x D_{\mu^N} \Psi(x^i)\Big) \delta_{x^i} (dx)
 \\
& + \frac{1}{N} \sum_{i,j=1}^N \iint |\Lambda(x^i)| \delta_{x^i}(dx) \delta_{x^j}(dx)  \Big(\Psi(\mu^{N}\{x^i \leftrightarrow x^j\} - \Psi(\mu^N)\Big),
\end{split}
\end{equation}
where the functional derivative  $D_\mu \Psi (x)$ is a function from $\R^d \gt \R$ defined by that for any signed measure $\nu$ with $\int \nu(dx) =0$,
\begin{equation}\label{eq:Dmu}
 \lim_{\eps \gt 0} \frac{\Psi(\mu + \eps \nu) - \Psi(\nu)}{\eps}=  \int D_\mu \Psi(x) \nu (dx).
\end{equation}
Now by the definition of the empirical measure $\mu^N$, the generator $\mathcal{L}_N$ can be rewritten as
\begin{equation}\label{eq:Ln}
\begin{split}
(\mathcal{L}_N \Psi) (\mu^N) & = \int \Big(\nabla_x D_{\mu^N}\Psi(x) \cdot \nabla \log \pi (x) + \Delta_x D_{\mu^N} \Psi(x)\Big) \mu^N(dx) \\
& + N \iint |\Lambda (x, \mu^N)| \mu^N(dx)\mu^N(dx^\prime)
\Big(\Psi(\mu^N \{x\leftrightarrow x^\prime \}) - \Psi (\mu^N)\Big),
\end{split}
\end{equation}
where $\Lambda (x, \mu)$ is defined by
$$
\Lambda (x, \mu) = \log (K\ast \mu (x)) - \log \pi (x) - \int \Big( \log (K\ast \mu (x)) - \log \pi (x)\Big) \mu(dx).
$$
Note that the measure $\mu^N \{x\leftrightarrow x^\prime \}$ on the right side of \eqref{eq:Ln}
is defined in \eqref{eq:mutnx} with jump rate $\Lambda(x,\mu^N)$.
With the generator, we can write the backward Kolmogorov equation on the observable $\Psi(\mu^N_t)$ as
$$
\partial_t \Psi(\mu^N_t) = \mathcal{L}_N \Psi(\mu^N_t), \quad \Psi(\mu^N_t)|_{t=0} = \Psi(\mu^N_0).
$$
Now passing to the limit $N\gt\infty$ and assuming that $\mu^N_t \gt \rho_t$
we claim that formally we have  $ (\mathcal{L_N} \Psi)(\mu^N_t) \gt (\mathcal{L} \Psi)(\rho_t)$
where the limiting generator $\mathcal{L}$ is given by
\begin{equation}\label{eq:L}
\begin{split}
 (\mathcal{L} \Psi)(\mu) =  \int \Big(\nabla_x D_{\mu}\Psi(x) \cdot \nabla \log \pi (x)+ \Delta_x D_{\mu} \Psi(x) \Big) \mu(dx)
 - \int \Lambda (x, \mu) \mu(dx)D_{\mu}\Psi(x).
\end{split}
\end{equation}
In fact, by assumption the first term on the right side of \eqref{eq:L} is the formal limit of the first term on the right side of \eqref{eq:Ln}.
For the second term, one first sees from the definition of the functional derivative in \eqref{eq:Dmu} that as $N\gt\infty$
$$
\Psi(\mu^N \{x\leftrightarrow x^\prime \}) - \Psi (\mu^N) \approx -\frac{1}{N}\int D_{\mu}\Psi(y) \text{sign} (\Lambda(x,\mu^N)) (\delta_x(dy) - \delta_{x^\prime}(dy)).
$$
This implies that as $N\gt\infty$ the second term on the right side of \eqref{eq:L} formally converges to
\begin{equation}\label{eq:L2}
\begin{split}
& -\iint|\Lambda (x, \mu)| \mu(dx)\mu(dx^\prime)
\int D_{\mu}\Psi(y) \text{sign} (\Lambda(x,\mu)) (\delta_x(dy) - \delta_{x^\prime}(dy))\\
&= -\iint\int D_{\mu}\Psi(y) \Lambda(x,\mu) \delta_x(dy) \mu(dx) \mu(dx^\prime)  + \iint\int D_{\mu}\Psi(y) \Lambda(x,\mu) \delta_{x^\prime}(dy) \mu(dx) \mu(dx^\prime) \\
& = -\int D_{\mu}\Psi(x) \Lambda(x,\mu)\mu(dx),
\end{split}
\end{equation}
where the last line follows from the fact that
\begin{equation*}
\begin{split}
  \iint\int D_{\mu}\Psi(y) \Lambda(x,\mu)\delta_{x^\prime}(dy) \mu(dx) \mu(dx^\prime)  = \int  D_{\mu}\Psi(y) \mu(dy)  \cdot
  \underbrace{\int \Lambda(x,\mu)\mu(dx)}_{=0}
  =0
\end{split}
  \end{equation*}
Combining above yields \eqref{eq:L}. Consequently, we obtain
the mean field backward Kolmogorov equation
$$
\partial_t \Psi(\rho_t) = (\mathcal{L} \Psi) (\rho_t), \quad  \Psi(\rho_t) |_{t=0} = \Psi(\rho_0).
$$
It is easy to check that this equation is precisely the time-evolution of $\Psi (\rho_t)$ where $\rho_t$ solves \eqref{eq:bdlk}.
This shows that $\mu^N_t \wgt \rho_t$ and concludes the proof.
\end{proof}

\section{More details on Example 1} \label{sec:addex1}
Let us first explain how we compute the numerical solutions of three continuous dynamics.
The Fokker-Planck equation \eqref{eq:fp1} is solved using the pseudo-spectral discretization in space and an
implicit backward Euler discretization in time. The pure birth-death equation \eqref{eq:fp2} is solved approximately by using the splitting scheme of alternating the following two steps:

Step 1: evolve the ODE system $\frac{d\rho_{t}(x)}{d t}= -\rho_{t} (\log \rho_t(x) - \log \pi(x))$ indexed by $x$ for a small time step $\Delta t$.

Step 2: renormalize the solution by setting $\rho_t(x) \leftarrow \rho_t(x)/\int \rho_t(x) dx$.

When $\Delta t$ is sufficiently small, this splitting scheme provides a good approximation to \eqref{eq:fp2}.
The Fokker-Planck equation with birth-death \eqref{eq:fp3} is solved by first evolving the Fokker-Planck equation \eqref{eq:fp1}
for a time step $\Delta t$ using the pseudo-spectral method and
then evolving the birth-death equation \eqref{eq:fp2} using the splitting scheme above for another time step $\Delta t $. We use $500$ spatial grids points in pseudo-spectral method
and time step size $5\times10^{-3}$ in time-marching.

We show in Figure \ref{fig:1d1} some snapshots of solutions of three continuous dynamics and the corresponding particle algorithms for Example 1,
which illustrates the acceleration effect of the birth-death dynamics on the Langevin dynamics.

\begin{figure}
  \centering
   \includegraphics[]{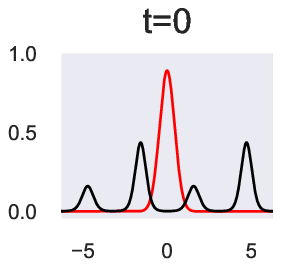}
    \includegraphics[]{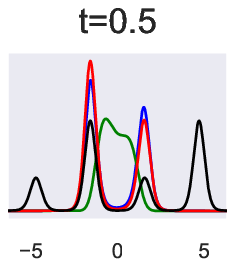}
    \includegraphics[]{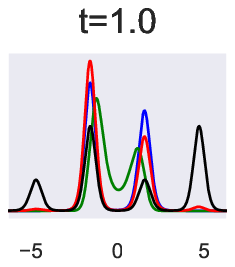}
        \includegraphics[]{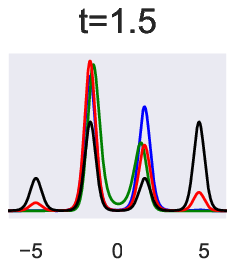}
            \includegraphics[]{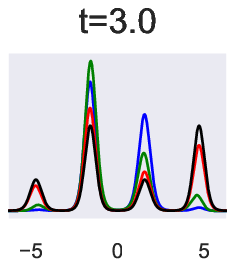}
            \includegraphics[scale=0.6]{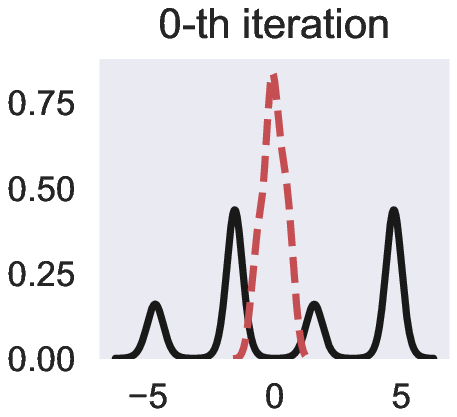} %
       \includegraphics[scale=0.6]{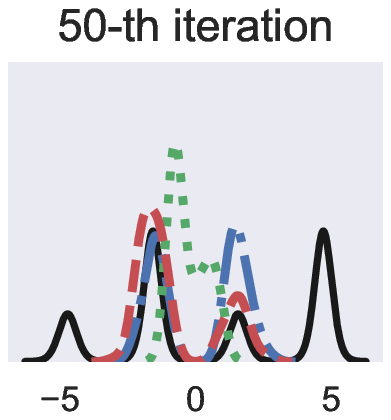}
      \includegraphics[scale=0.6]{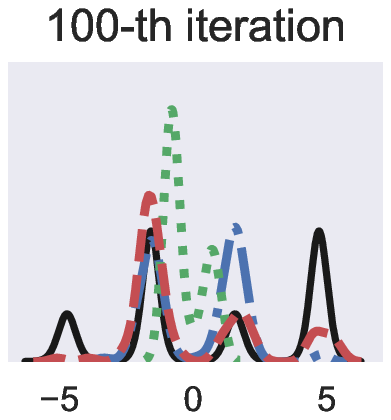}
 \includegraphics[scale=0.6]{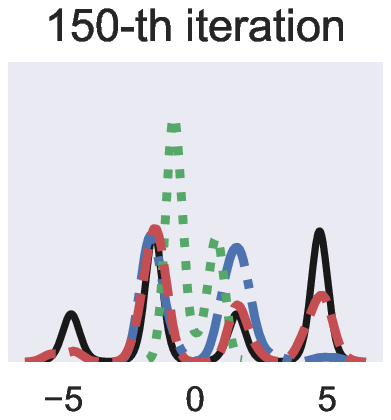}
    \includegraphics[scale=0.6]{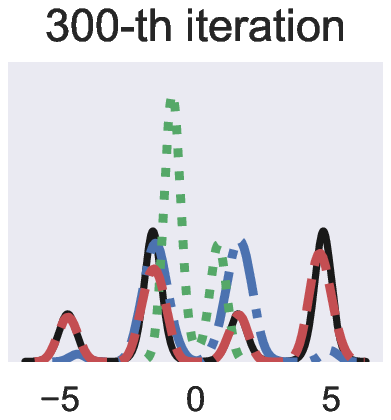}
     \vspace{-0.2in}
  \caption{Solutions of continuous dynamics (top row) at varying times and  distributions (kernel density estimators) of the corresponding particle algorithms (bottom row)
  at different iterations in Example 1. The initial distribution is $\mathcal{N}(0,0.2)$.
  The solid black lines are the target density and the blue (resp., blue dash-dot) lines are solutions of the FPE
  (resp.,  iterates of parallel ULA). The green and green dotted lines are solutions of BDE and the distributions of particles computed using BDS respectively.
  The red lines and red dashed lines are solutions of BDL-FPE and the distributions of particles computed using BDLS respectively. }
  \label{fig:1d1}\end{figure}

We present another group of numerical results for Example 1 in Figure \ref{ps1dadd1} and Figure \ref{ps1dadd2},
where we choose a Gaussian initial distribution with a larger variance $\sigma =4$. As before, we find that our algorithm BDLS outperforms ULA.
Observe that the particle algorithm BDS (based on pure birth-death dynamics)
works also well in this case because the initial particles are spread out on the whole domain so that they can quickly cluster around
different modes by rearranging their locations.

\begin{figure}
\centering
   \includegraphics[]{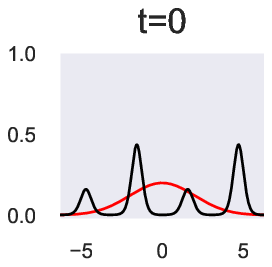}
    \includegraphics[]{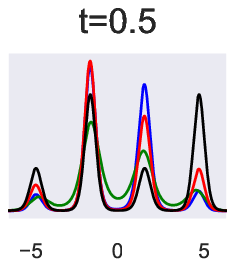}
    \includegraphics[]{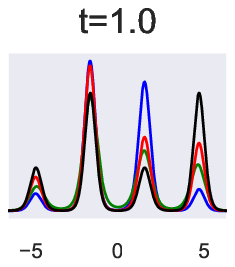}
        \includegraphics[]{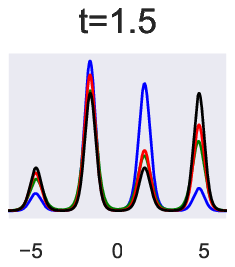}
            \includegraphics[]{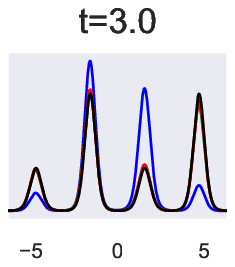}
                        \includegraphics[scale=0.6]{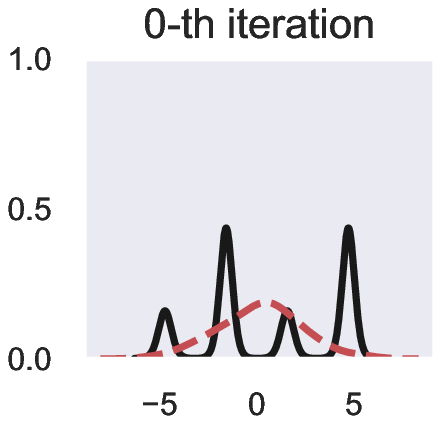} %
       \includegraphics[scale=0.6]{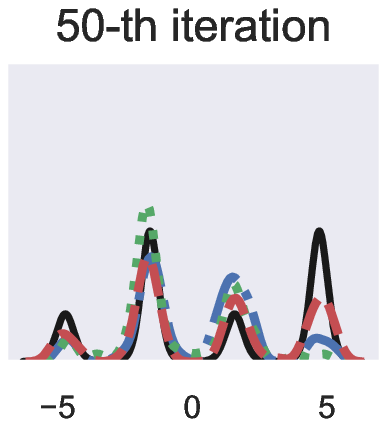}
      \includegraphics[scale=0.6]{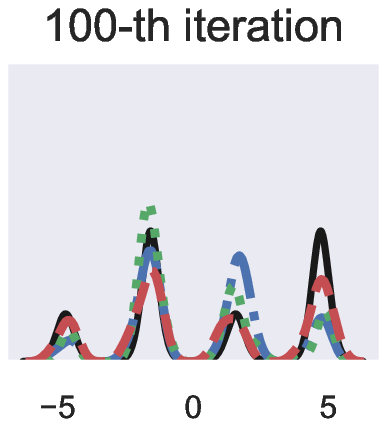}
 \includegraphics[scale=0.6]{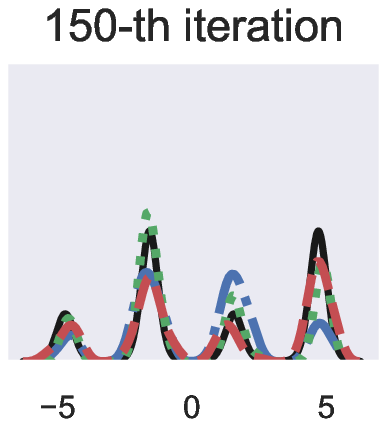}
    \includegraphics[scale=0.6]{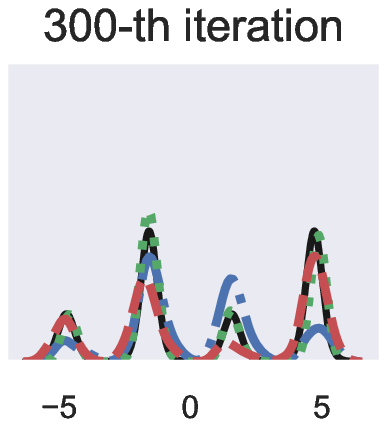}
    \caption{The same setting as in Figure \ref{fig:1d1} for Example 1 but with initial distribution $\mathcal{N}(0,4)$.
  The solid black lines are the target density and the blue (resp., blue dash-dot) lines are solutions of the FPE
  (resp.,  iterates of parallel ULA). The green and green dotted lines are solutions of BDE and the particles generated using BDS respectively.
  The red lines and red dashed lines are solutions of BDL-FPE and particles generated using BDLS respectively. }
  \label{ps1dadd1}
\end{figure}

\begin{figure}
  \centering
\includegraphics[scale = 0.6]{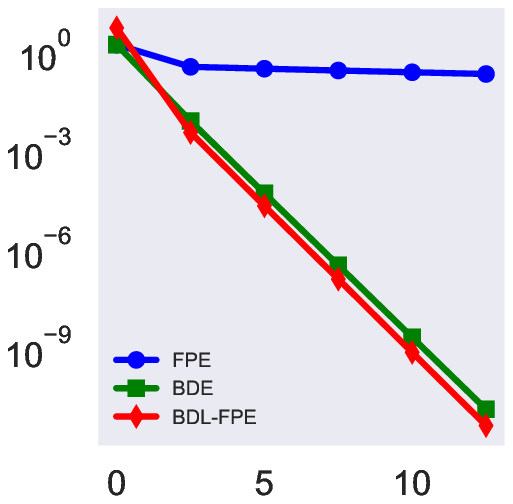}
 \includegraphics[scale = 0.6]{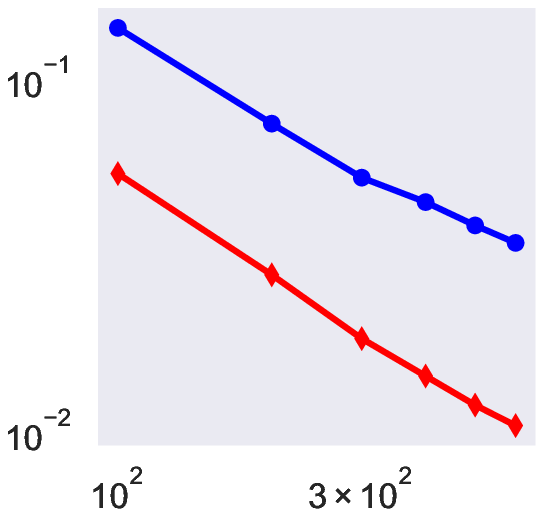}
 \includegraphics[scale = 0.6]{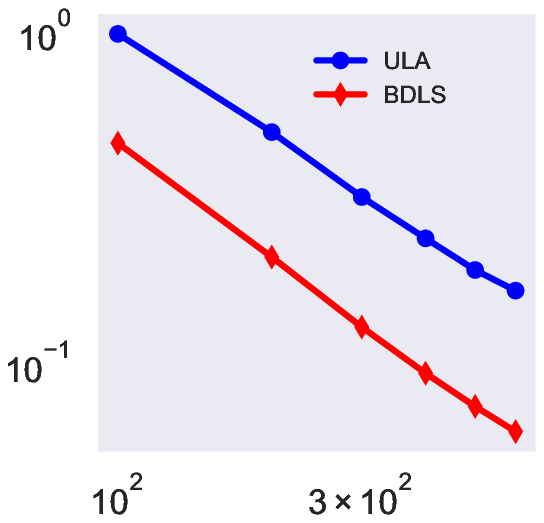}
\caption{The same setting as in Figure \ref{fig:1d2} for Example 1 but with initial distribution $\mathcal{N}(0,4)$.
The left figure shows decay of the KL divergence in semilogy scale
along the evolution of three continuous dynamics. The middle (or the right) figure
shows the decay in loglog scale of mean square errors of estimating mean (or variance) using varying number of particles.}
  \label{ps1dadd2}
  \end{figure}


\section{More details on Example 3}\label{sec:addex3}
We provide more details on the Bayesian Gaussian mixture model used in Example 3. Let $y = \{y_1,\cdots, y_n\}$ be a dataset consisting of an i.i.d. sequence of samples from
 the Gaussian mixture distribution
$$
p(y | x) = \sum_{k=1}^3 w_k \mathcal{N}(y; \mu_k, \lambda_k^{-1})
$$
where $\mu_k$ and $\lambda_k \geq 0$ are the means and precisions of the Gaussian components. The weights $\{w_k\}_{k=1}^3$ satisfy that $0\leq w_k \leq 1$ and
that $\sum_{k=1}^3 w_k = 1$. We denote by $x$ the vector of parameters/hyperparamters in this model. We take the same prior distribution as in \cite{Chopin2012} and \cite{LMW18}, namely for $k=1,2,3$,
$$
\mu_k \sim \mathcal{N}(m, \kappa^{-1}),\ \lambda_k \sim \text{Gamma}(\alpha, \beta), \ \beta \sim  \text{Gamma}(g, h),\ (w_1, w_2) \sim  \text{Dirichlet}_3(1,1,1).
$$
We also choose $m = M, \kappa=4/R^2, \alpha = 2, g = 0.02, h = 100 g/ (\alpha R^2)$, where $R$ and $M$ are the mean and range of the data $y$. By the Bayes' rule,
the posterior is given by
\begin{equation}
 \begin{split}
p(x|y)& \propto \beta^{3\alpha + g -1} \Big(\prod_{k=1}^3\lambda_k\Big)^{\alpha-1} \exp\Big(-\frac{\kappa}{2}\sum_{k=1}^3 (\mu_k - m)^2 -\beta \big(h + \sum_{k=1}^3 \lambda_k\big)\Big)\\
& \times \prod_{i=1}^n\Big( \sum_{k=1}^3 w_k \lambda_k^{1/2}\exp\big(-\frac{\lambda_k}{2} (y_i - \mu_k)^2\big)\Big)
\end{split}
\end{equation}
The unknown  vector $x$ of  parameters  is
$$
x = (w_1,w_2,\mu_1,\mu_2,\mu_3, \lambda_1,\lambda_2,\lambda_3,\beta) \in \Omega := \mathcal{S}_3 \times \R^3 \times \R_+^4,
$$
where $\R_+ = [0,\infty)$ and $\mathcal{S}_3 = \{(w_1,w_2)\in \R_+^2 |\ 0\leq w_1 + w_2 \leq 1\}$ is the probability simplex in $\R^2$.
There are several issues in the implementation of ULA and BDLS. First, the constraints on $w_k, \lambda_k$ and $\beta$
may be violated if the vanilla ULA and BDLS are applied on the whole space without additional treatment during the evolution. Note also that the posterior
density is not differentiable near the boundary of $\Omega$.
Moreover, even inside the domain $\Omega$, the gradient of $\log \pi$ may not be globally Lipschitz, which may lead to non-ergodic Markov chains
when applying ULA and BDLS. To overcome the latter issue, we use the following tamed ULA scheme \cite{hutzenthaler2012}
$$
x_{k+1} = x_{k} +  \frac{\Delta t \nabla \log \pi (x_{k})}{1 + \Delta t |\nabla \log \pi (x_{k})|} + \sqrt{2 \Delta t} \xi_k, k=1,2,\cdots,
$$
where $\xi_k \sim \mathcal{N}(0,1)$. The small modification of the drift stabalizes the algorithm and makes the resulting Markov chain ergodic;
see \cite{hutzenthaler2012,BDMS18} for more discussions about its convergence analysis.

To circumvent the constraint issue, we set a reflecting boundary at the origin for the
parameters $\lambda_k$ and $\beta$, i.e. we take the modulus
of these parameters if they become negative. For the weight vector $(w_1,w_2)$,  to improve sampling efficiency we
slightly relax the strong constraint that $(w_1,w_2)\in \mathcal{S}_3$ and instead only
require that $0\leq w_k \leq 1,k=1,2$.  We achieve this by setting reflections on the boundary $w=0$ and $w=1$.
Numerical experiments show that the relaxation does not break this constraint on the samplers near equilibrium;
see Figure \ref{fig:bayesgmm3b}.

Finally we include several numerical results on Example 3  that are not fitted in the main paper. Figure \ref{fig:bayesgmm2} compares the
evolution of particles computed using parallel ULA and BDLS in $(w_1,w_2)$.

\begin{figure}
  \raggedleft
    \hspace{1in}
  \includegraphics[scale=0.6]{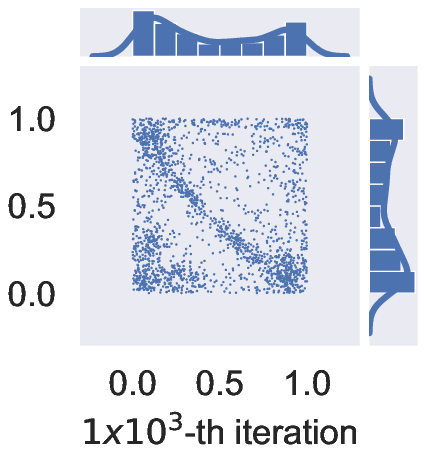}\hspace{-0.1in}
  \includegraphics[scale=0.6]{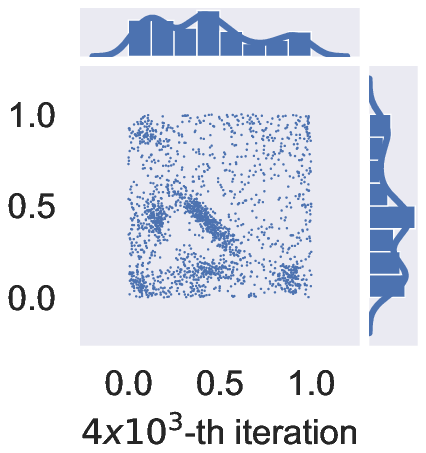}\hspace{-0.1in}
  \includegraphics[scale=0.6]{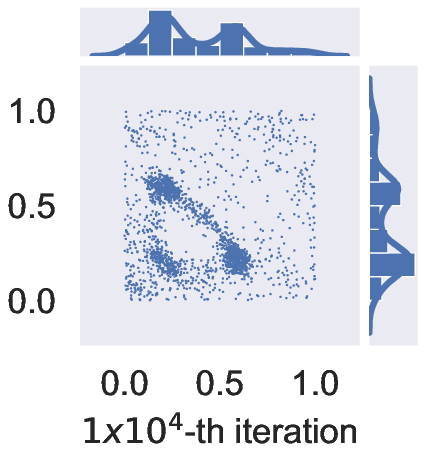}\hspace{-0.1in}
  \includegraphics[scale=0.6]{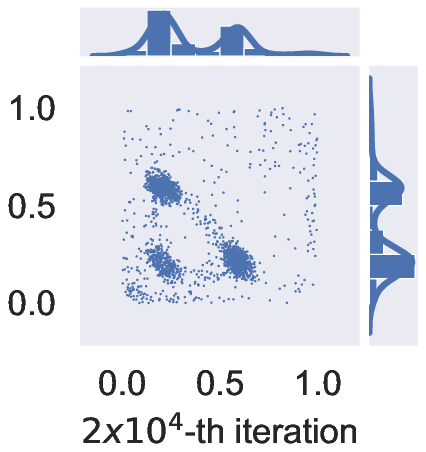}\hspace{-0.1in}

  \includegraphics[scale=0.6]{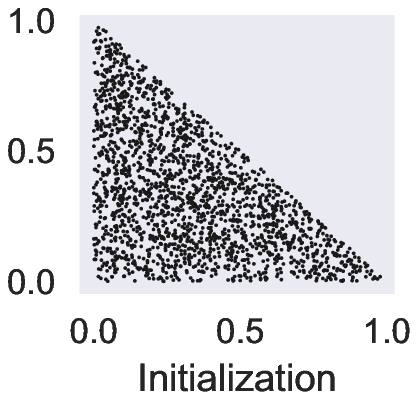}
  \includegraphics[scale=0.6]{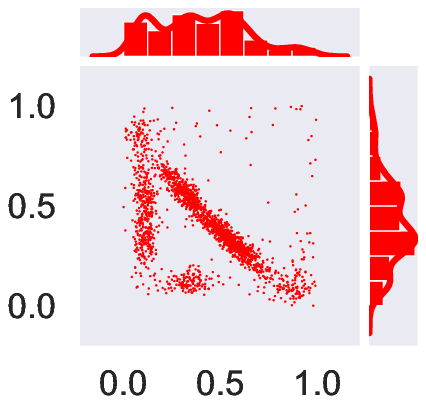}\hspace{-0.1in}
  \includegraphics[scale=0.6]{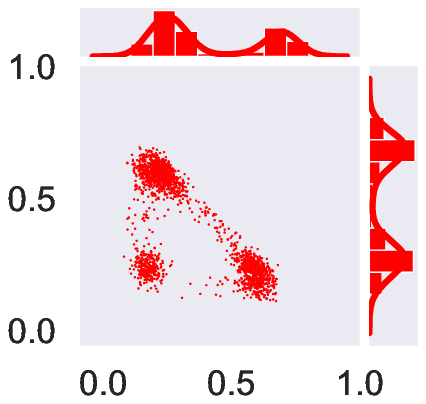}\hspace{-0.1in}
  \includegraphics[scale=0.6]{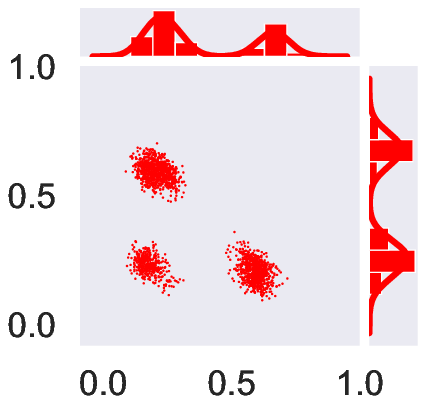}\hspace{-0.1in}
  \includegraphics[scale=0.6]{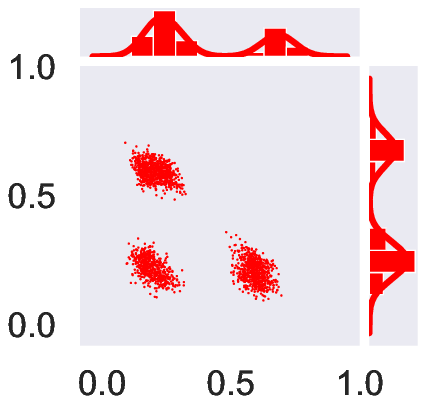}\hspace{-0.1in}
   \caption{Evolution of particles   in $(w_1,w_2)$-coordinate for  Example 3. The first column shows the initial  particles in $(w_1,w_2)$-coordinate. The remaining columns compare the scatter plots of particles in $(w_1,w_2)$ and their marginals computed using parallel ULA (blue) and BDLS (red) at different iterations. The constraint $0\leq w_1 +w_2\leq 1$ is relaxed to $0\leq w_k\leq 1$ in implementing ULA and BDLS to boost sampling efficiency.}
    \label{fig:bayesgmm2}
\end{figure}

Figure \ref{fig:bayesgmm3} displays the distribution
of particles in $(\mu_1,\mu_2)$ and in $(w_1,w_2)$ at larger numbers of iterations (near equilibrium), which complements
Figure \ref{fig:bayesgmm1} and  Figure \ref{fig:bayesgmm2} in illustrating the faster convergence of BDLS compared to ULA.

\begin{figure}
   \begin{subfigure}[t]{0.4\textwidth}
       \centering
  \includegraphics[scale=0.6]{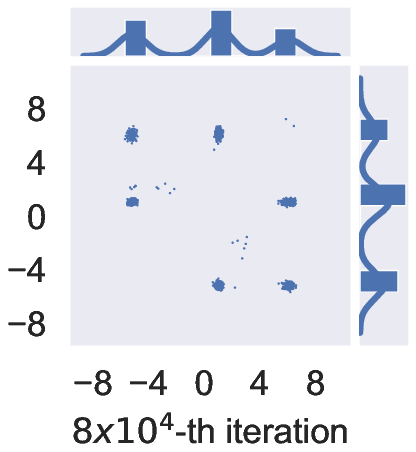}
  \includegraphics[scale=0.6]{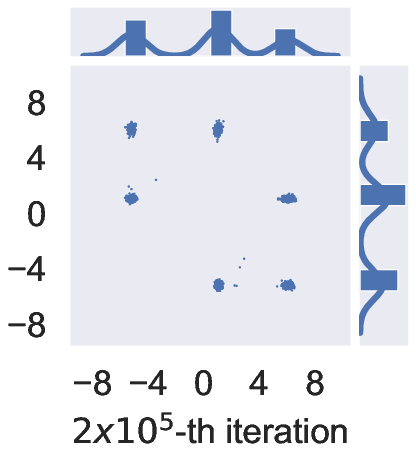}
   \includegraphics[scale=0.6]{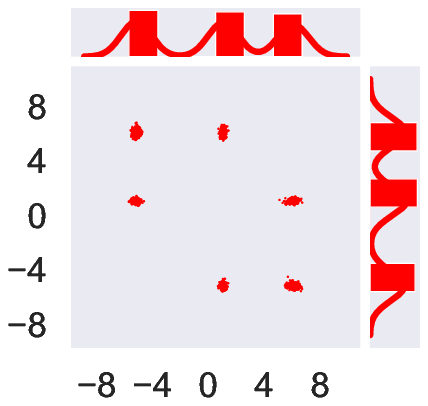}
  \includegraphics[scale=0.6]{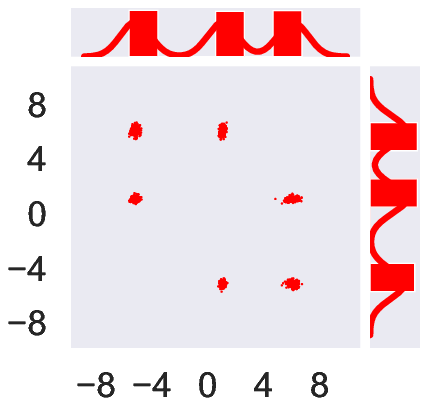}
  \caption{$(\mu_1,\mu_2)$-coordinate}\label{fig:bayesgmm3a}
  \end{subfigure}
    \begin{subfigure}[t]{0.4\textwidth}  \centering
        \includegraphics[scale=0.6]{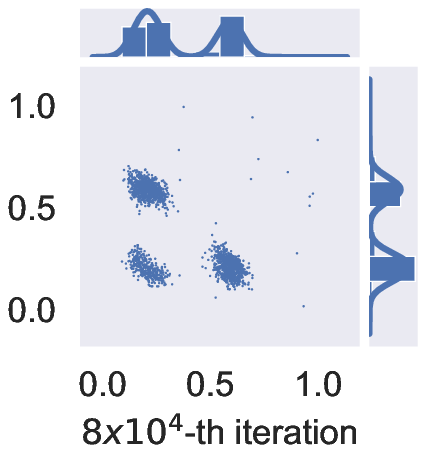}
  \includegraphics[scale=0.6]{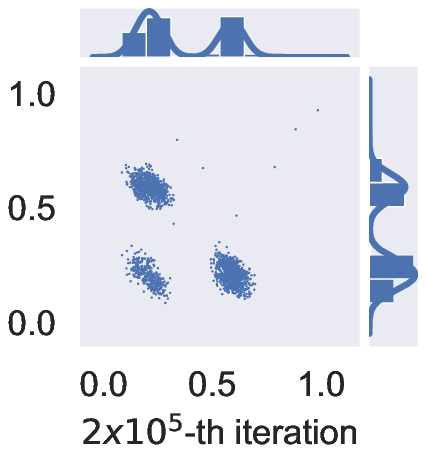}
  \includegraphics[scale=0.6]{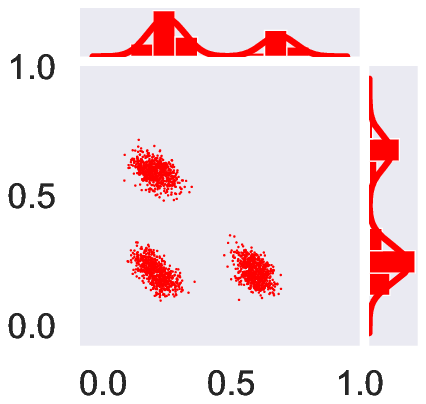}
  \includegraphics[scale=0.6]{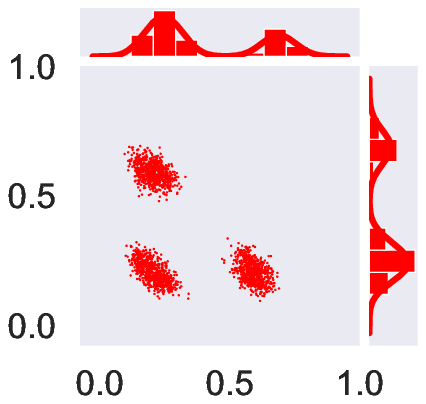}
    \caption{$(w_1,w_2)$-coordinate}\label{fig:bayesgmm3b}
    \end{subfigure}
   \caption{Additional snapshots of particle evolution in $(\mu_1,\mu_2)$-coordinate and $(w_1,w_2)$-coordinate for Example 3.
   As in Figure \ref{fig:bayesgmm1}, the top row (blue) and the bottom row (red) show the scatter plots of particles and their marginals computed using parallel ULA  and BDLS respectively
   at larger iterations.
   }
    \label{fig:bayesgmm3}
\end{figure}

\newpage

\bibliographystyle{abbrv} 
\bibliography{sampling}

\begin{thebibliography}{10}

\bibitem{Anderson:80}
H.~C. Andersen.
\newblock Molecular dynamics simulations at constant pressure and/or
  temperature.
\newblock {\em The Journal of Chemical Physics}, 72(4):2384--2393, 1980.

\bibitem{BBM16}
M.~Bauer, M.~Bruveris, and P.~W. Michor.
\newblock Uniqueness of the {F}isher-{R}ao metric on the space of smooth
  densities.
\newblock {\em Bull. Lond. Math. Soc.}, 48(3):499--506, 2016.

\bibitem{BierkensFearnheadRoberts:19}
J.~Bierkens, P.~Fearnhead, and G.~Roberts.
\newblock The {Z}ig-{Z}ag process and super-efficient sampling for {B}ayesian
  analysis of big data.
\newblock {\em Ann. Statist.}, 47:1288--1320, 2019.

\bibitem{BCVollmerDoucet:18}
A.~Bouchard-Cote, S.~J. Vollmer, and A.~Doucet.
\newblock The bouncy particle sampler: a nonreversible rejection-free {M}arkov
  chain {M}onte {C}arlo method.
\newblock {\em J. Amer. Statist. Assc.}, 113:855--867, 2018.

\bibitem{BEGK04}
A.~Bovier, M.~Eckhoff, V.~Gayrard, and M.~Klein.
\newblock Metastability in reversible diffusion processes. {I}. {S}harp
  asymptotics for capacities and exit times.
\newblock {\em J. Eur. Math. Soc. (JEMS)}, 6(4):399--424, 2004.

\bibitem{BGK05}
A.~Bovier, V.~Gayrard, and M.~Klein.
\newblock Metastability in reversible diffusion processes. {II}. {P}recise
  asymptotics for small eigenvalues.
\newblock {\em J. Eur. Math. Soc. (JEMS)}, 7(1):69--99, 2005.

\bibitem{brenier2018optimal}
Y.~Brenier and D.~Vorotnikov.
\newblock On optimal transport of matrix-valued measures.
\newblock {\em arXiv preprint arXiv:1808.05064}, 2018.

\bibitem{BDMS18}
N.~Brosse, A.~Durmus, Éric Moulines, and S.~Sabanis.
\newblock The tamed unadjusted langevin algorithm.
\newblock {\em Stochastic Processes and their Applications}, 2018.

\bibitem{ChengChatterji:18}
X.~Cheng, N.~S. Chatterji, P.~L. Bartlett, and M.~I. Jordan.
\newblock Underdamped {L}angevin {MCMC}: {A} non-asymptotic analysis.
\newblock In S.~Bubeck, V.~Perchet, and P.~Rigollet, editors, {\em Proceedings
  of the 31st Conference On Learning Theory}, volume~75 of {\em Proceedings of
  Machine Learning Research}, pages 300--323. PMLR, 06--09 Jul 2018.

\bibitem{CPSV18}
L.~Chizat, G.~Peyr\'{e}, B.~Schmitzer, and F.-X. Vialard.
\newblock An interpolating distance between optimal transport and
  {F}isher-{R}ao metrics.
\newblock {\em Found. Comput. Math.}, 18(1):1--44, 2018.

\bibitem{Chopin2012}
N.~Chopin, T.~Leli{\`e}vre, and G.~Stoltz.
\newblock Free energy methods for bayesian inference: efficient exploration of
  univariate gaussian mixture posteriors.
\newblock {\em Statistics and Computing}, 22(4):897--916, Jul 2012.

\bibitem{dalalyan2017theoretical}
A.~S. Dalalyan.
\newblock Theoretical guarantees for approximate sampling from smooth and
  log-concave densities.
\newblock {\em Journal of the Royal Statistical Society: Series B (Statistical
  Methodology)}, 79(3):651--676, 2017.

\bibitem{D93}
D.~A. Dawson.
\newblock Measure-valued {M}arkov processes.
\newblock In {\em \'{E}cole d'\'{E}t\'{e} de {P}robabilit\'{e}s de
  {S}aint-{F}lour {XXI}---1991}, volume 1541 of {\em Lecture Notes in Math.},
  pages 1--260. Springer, Berlin, 1993.

\bibitem{DelMoralDoucetJasra:06}
P.~Del~Moral, A.~Doucet, and A.~Jasra.
\newblock Sequential {M}onte {C}arlo samplers.
\newblock {\em J. R. Statist. Soc. B}, 68:411--436, 2006.

\bibitem{durmus2017nonasymptotic}
A.~Durmus and E.~Moulines.
\newblock Nonasymptotic convergence analysis for the unadjusted {L}angevin
  algorithm.
\newblock {\em The Annals of Applied Probability}, 27(3):1551--1587, 2017.

\bibitem{DwivediChenWainwrightYu:18}
R.~Dwivedi, Y.~Chen, M.~J. Wainwright, and B.~Yu.
\newblock Log-concave sampling: {M}etropolis-{H}astings algorithms are fast!
\newblock In S.~Bubeck, V.~Perchet, and P.~Rigollet, editors, {\em Proceedings
  of the 31st Conference On Learning Theory}, volume~75 of {\em Proceedings of
  Machine Learning Research}, pages 793--797. PMLR, 06--09 Jul 2018.

\bibitem{FW84}
M.~I. Freidlin and A.~D. Wentzell.
\newblock {\em Random perturbations of dynamical systems}, volume 260 of {\em
  Grundlehren der Mathematischen Wissenschaften [Fundamental Principles of
  Mathematical Sciences]}.
\newblock Springer-Verlag, New York, 1984.
\newblock Translated from the Russian by Joseph Sz\"{u}cs.

\bibitem{GM17}
T.~O. Gallou\"{e}t and L.~Monsaingeon.
\newblock A {JKO} splitting scheme for {K}antorovich-{F}isher-{R}ao gradient
  flows.
\newblock {\em SIAM J. Math. Anal.}, 49(2):1100--1130, 2017.

\bibitem{Geyer:91}
C.~J. Geyer.
\newblock Markov chain {M}onte {C}arlo maximum likelihood.
\newblock In E.~M. Keramidas, editor, {\em Computing Science and Statistics:
  Proc. 23rd Symposium on the Interface}, pages 156--163. Interface Foundation,
  Fairfax Station, VA, 1991.

\bibitem{GrestKremer:86}
G.~S. Grest and K.~Kremer.
\newblock Molecular dynamics simulation for polymers in the presence of a heat
  bath.
\newblock {\em Phys. Rev. A}, 33:3628--3631, 1986.

\bibitem{hutzenthaler2012}
M.~Hutzenthaler, A.~Jentzen, and P.~E. Kloeden.
\newblock Strong convergence of an explicit numerical method for sdes with
  nonglobally lipschitz continuous coefficients.
\newblock {\em Ann. Appl. Probab.}, 22(4):1611--1641, 08 2012.

\bibitem{JKO}
R.~Jordan, D.~Kinderlehrer, and F.~Otto.
\newblock {The Variational Formulation of the Fokker--Planck Equation}.
\newblock {\em SIAM Journal on Mathematical Analysis}, 29(1):1--17, 1998.

\bibitem{KMV16}
S.~Kondratyev, L.~Monsaingeon, and D.~Vorotnikov.
\newblock A fitness-driven cross-diffusion system from population dynamics as a
  gradient flow.
\newblock {\em J. Differential Equations}, 261(5):2784--2808, 2016.

\bibitem{LaioParrinello:02}
A.~Laio and M.~Parrinello.
\newblock Escaping free-energy minima.
\newblock {\em Proceedings of the National Academy of Sciences},
  99(20):12562--12566, 2002.

\bibitem{LMW18}
B.~Leimkuhler, C.~Matthews, and J.~Weare.
\newblock Ensemble preconditioning for {M}arkov chain {M}onte {C}arlo
  simulation.
\newblock {\em Stat. Comput.}, 28(2):277--290, 2018.

\bibitem{LMS16}
M.~Liero, A.~Mielke, and G.~Savar\'{e}.
\newblock Optimal transport in competition with reaction: the
  {H}ellinger-{K}antorovich distance and geodesic curves.
\newblock {\em SIAM J. Math. Anal.}, 48(4):2869--2911, 2016.

\bibitem{Liu17}
Q.~Liu.
\newblock Stein variational gradient descent as gradient flow.
\newblock {\em Advances in Neural Information Processing Systems (NIPS 2017)},
  30, 2017.

\bibitem{LiuWang16}
Q.~Liu and D.~Wang.
\newblock Stein variational gradient descent: A general purpose bayesian
  inference algorithm.
\newblock {\em Advances in Neural Information Processing Systems (NIPS 2016)},
  29, 2016.

\bibitem{LLN19}
J.~Lu, Y.~Lu, and J.~Nolen.
\newblock Scaling limit of the {S}tein variational gradient descent: the mean
  field regime.
\newblock {\em SIAM J. Math. Anal.}, 51:648--671, 2019.

\bibitem{MaChenFox:15}
Y.-A. Ma, T.~Chen, and E.~Fox.
\newblock A complete recipe for stochastic gradient {MCMC}.
\newblock In C.~Cortes, N.~D. Lawrence, D.~D. Lee, M.~Sugiyama, and R.~Garnett,
  editors, {\em Advances in Neural Information Processing Systems 28}, pages
  2917--2925. Curran Associates, Inc., 2015.

\bibitem{MangoubiSmith:17}
O.~Mangoubi and A.~Smith.
\newblock Rapid mixing of {H}amiltonian {M}onte {C}arlo on strongly log-concave
  distributions, 2017.
\newblock preprint, arXiv:1708.07114.

\bibitem{MarinariParisi:92}
E.~Marinari and G.~Parisi.
\newblock Simulated tempering: a new {M}onte {C}arlo scheme.
\newblock {\em Eur. Lett.}, 19:451--458, 1992.

\bibitem{LMS18}
G.~S. Matthias~Liero, Alexander~Mielke.
\newblock Optimal entropy-transport problems and a new hellinger–kantorovich
  distance between positive measures.
\newblock {\em Invent. Math.}, 211(3):969--1117, 2018.

\bibitem{MS14}
G.~Menz and A.~Schlichting.
\newblock Poincar\'{e} and logarithmic {S}obolev inequalities by decomposition
  of the energy landscape.
\newblock {\em Ann. Probab.}, 42(5):1809--1884, 2014.

\bibitem{Neal:01}
R.~M. Neal.
\newblock Annealed importance sampling.
\newblock {\em Statistics and Computing}, 11(2):125--139, Apr. 2001.

\bibitem{Neal:11}
R.~M. Neal.
\newblock {\em {MCMC} using {H}amiltonian dynamics}, chapter~5.
\newblock Chapman and Hall/CRC, 2011.

\bibitem{Nilmeier:11}
J.~P. Nilmeier, G.~E. Crooks, D.~D.~L. Minh, and J.~D. Chodera.
\newblock Nonequilibrium candidate {M}onte {C}arlo is an efficient tool for
  equilibrium simulation.
\newblock {\em Proceedings of the National Academy of Sciences},
  108(45):E1009--E1018, 2011.

\bibitem{Nor97}
J.~R. Norris.
\newblock Long-time behaviour of heat flow: global estimates and exact
  asymptotics.
\newblock {\em Arch. Rational Mech. Anal.}, 140(2):161--195, 1997.

\bibitem{Otto01_porous}
F.~Otto.
\newblock The geometry of dissipative evolution equations: The porous medium
  equation.
\newblock {\em Communications in Partial Differential Equations},
  26(1-2):101--174, 2001.

\bibitem{RT96}
G.~O. Roberts and R.~L. Tweedie.
\newblock Exponential convergence of {L}angevin distributions and their
  discrete approximations.
\newblock {\em Bernoulli}, 2(4):341--363, 1996.

\bibitem{RosskyDollFriedman:78}
P.~J. Rossky, J.~D. Doll, and H.~L. Friedman.
\newblock Brownian dynamics as smart {M}onte {C}arlo simulation.
\newblock {\em J. Chem. Phys.}, 69:4628, 1978.

\bibitem{Rotskoff2019neuron}
G.~Rotskoff, S.~Jelassi, J.~Bruna, and E.~Vanden-Eijnden.
\newblock Global convergence of neuron birth-death dynamics, 2019.
\newblock preprint, arXiv:1902.01843.

\bibitem{SwendsenWang:86}
R.~H. Swendsen and J.-S. Wang.
\newblock Replica {M}onte {C}arlo simulation of spin-glasses.
\newblock {\em Phys. Rev. Lett.}, 57:2607--2610, 1986.

\bibitem{WangLandau:01}
F.~Wang and D.~P. Landau.
\newblock Efficient, multiple-range random walk algorithm to calculate the
  density of states.
\newblock {\em Phys. Rev. Lett.}, 86:2050, 2001.

\bibitem{WellingTeh:11}
M.~Welling and Y.~W. Teh.
\newblock Bayesian learning via stochastic gradient langevin dynamics.
\newblock In {\em Proceedings of the 28th International Conference on
  International Conference on Machine Learning}, ICML'11, pages 681--688, USA,
  2011. Omnipress.

\end{thebibliography}

\end{document}